\documentclass[sigconf]{acmart}

\copyrightyear{2025}
\acmYear{2025}
\setcopyright{acmlicensed}\acmConference[WWW '25]{Proceedings of the ACM Web Conference 2025}{April 28-May 2, 2025}{Sydney, NSW, Australia}
\acmBooktitle{Proceedings of the ACM Web Conference 2025 (WWW '25), April 28-May 2, 2025, Sydney, NSW, Australia}
\acmDOI{10.1145/3696410.3714557}
\acmISBN{979-8-4007-1274-6/25/04}

\settopmatter{printacmref=true}

\usepackage[utf8]{inputenc} 
\usepackage{url}            
\usepackage{amsfonts}       
\usepackage{nicefrac}       

\usepackage{bbm, comment, amsmath}

\usepackage{multirow}
\usepackage{wrapfig}
\usepackage[utf8]{inputenc}
\usepackage{subcaption}
\usepackage{xcolor}
\usepackage{enumitem}
\usepackage{url}
\usepackage[ruled]{algorithm2e} 

\usepackage{amsthm}
\usepackage{hyperref}     
\usepackage{cleveref}
\usepackage{pgfplots}
\pgfplotsset{compat=newest}

\usepackage{natbib}

\usepackage{ifthen, kvoptions}

\newtheorem*{theorem*}{Theorem}
\newtheorem{theorem}{Theorem}[section]
\newtheorem{lemma}[theorem]{Lemma}

\newtheorem{corollary}[theorem]{Corollary}
\newtheorem{example}[theorem]{Example}

\newtheorem{definition}[theorem]{Definition}

\usepackage{amsmath,amsfonts,bm}









\def\eqref#1{equation~\ref{#1}}









\def\1{\bm{1}}








\def\vs{{\bm{s}}}

\def\vx{{\bm{x}}}
\def\vy{{\bm{y}}}



\DeclareMathAlphabet{\mathsfit}{\encodingdefault}{\sfdefault}{m}{sl}
\SetMathAlphabet{\mathsfit}{bold}{\encodingdefault}{\sfdefault}{bx}{n}










\newcommand{\E}{\mathbb{E}}



\DeclareMathOperator*{\argmin}{arg\,min}

\newcommand{\Thetaci}[0]{\Theta^{\text{ci}}}

\DeclareMathOperator{\rsupp}{rsupp}

\usepackage{color-edits}
\addauthor[Yuqing]{yk}{blue}
\addauthor[Yongkang]{gyk}{red}


\AtBeginDocument{%
  }




\begin{document}

\title{Robust Aggregation with Adversarial Experts}

\author{Yongkang Guo}
\email{yongkang_guo@pku.edu.cn}
\affiliation{%
  \institution{School of Computer Science, Peking University}
  \city{Beijing}
  \country{China}
}

\author{Yuqing Kong}
\authornote{Corresponding author}
\email{yuqing.kong@pku.edu.cn}
\affiliation{%
  \institution{School of Computer Science, Peking University}
  \city{Beijing}
  \country{China}
}

\begin{abstract}
We consider a robust aggregation problem in the presence of both truthful and adversarial experts. The truthful experts will report their private signals truthfully, while the adversarial experts can report arbitrarily. We assume experts are marginally symmetric in the sense that they share the same common prior and marginal posteriors. The rule maker needs to design an aggregator to predict the true world state from these experts' reports, without knowledge of the underlying information structures or adversarial strategies. We aim to find the optimal aggregator that outputs a forecast minimizing regret under the worst information structure and adversarial strategies. The regret is defined by the difference in expected loss between the aggregator and a benchmark who aggregates optimally given the information structure and reports of truthful experts. 

We focus on binary states and reports. Under L1 loss, we show that the truncated mean aggregator is optimal. When there are at most k adversaries, this aggregator discards the k lowest and highest reported values and averages the remaining ones. For L2 loss, the optimal aggregators are piecewise linear functions. All the optimalities hold when the ratio of adversaries is bounded above by a value determined by the experts' priors and posteriors. The regret only depends on the ratio of adversaries, not on their total number. For hard aggregators that output a decision, we prove that a random version of the truncated mean is optimal for both L1 and L2. This aggregator randomly follows a remaining value after discarding the $k$ lowest and highest reported values. We extend the hard aggregator to multi-state setting. We evaluate our aggregators numerically in an ensemble learning task. We also obtain negative results for general adversarial aggregation problems under broader information structures and report spaces. 
\end{abstract}

\begin{CCSXML}
<ccs2012>
   <concept>
       <concept_id>10003752.10010070.10010099.10010100</concept_id>
       <concept_desc>Theory of computation~Algorithmic game theory</concept_desc>
       <concept_significance>500</concept_significance>
       </concept>
   <concept>
       <concept_id>10003752.10010070.10010099.10010101</concept_id>
       <concept_desc>Theory of computation~Algorithmic mechanism design</concept_desc>
       <concept_significance>500</concept_significance>
       </concept>
 </ccs2012>
\end{CCSXML}

\ccsdesc[500]{Theory of computation~Algorithmic game theory}
\ccsdesc[500]{Theory of computation~Algorithmic mechanism design}
\keywords{Robust Aggregation, Adversary, Information Aggregation}

\maketitle

\section{Introduction}

You are a rule maker tasked with aggregating the scores of five judges to assign a final score for an athlete's performance. There is a crucial twist: some of those scores might be tainted by bribes! The briber's motive is unknown, potentially inflating or deflating the score. You have no clue about the underlying details. How should you decide the aggregation rules?

Such concerns of aggregating information with adversarial ``experts'' also exist in various scenarios. For instance, when the jury debates, some jurors may be swayed by a bribe to deliver a specific verdict. When miners are asked to reach a consensus on a blockchain network, some malicious miners manipulate validations for personal gain. Furthermore, in ensemble learning, when combining predictions from multiple models, some models are compromised by adversarial actors. Therefore, it is crucial to design aggregation rules that are robust to adversarial attacks. 

Intuitively, the truncated mean, which discards some highest and lowest scores and averages the remaining scores, seems reasonable and is widely used in practice. But the understanding of its theoretical effectiveness is limited. A natural question is, is this the most effective strategy? 

To answer this, we need a clear evaluation criterion for aggregation methods. A common approach is average loss, which calculates the average difference between the aggregator's output and the true state across various cases. Two key elements define a case: the information structure and the adversarial behavior. The information structure is the joint distribution of experts' private signals and the true state. However, average loss heavily depends on the specific set of cases chosen and contradicts the assumption that the adversarial behavior can be arbitrary.

Another option is the worst-case loss, focusing on the maximum loss the aggregator obtains under any case. However, if all experts are completely uninformed, no aggregator can perform well. Therefore, the worst-case loss cannot differentiate between aggregators.

Instead, we adopt a robust framework commonly used in online learning and robust information aggregation \citep{arieli2018robust}. This framework aims to minimize the aggregator's ``regret'' in the worst case. Regret measures the difference between the aggregator's performance and an omniscient aggregator who knows the information structure and truthful reports. 


%

The framework can be understood as a zero-sum game between two players. Nature chooses an information structure and adversarial strategy, aiming to maximize the regret. The rule maker picks an aggregator to minimize the regret. Generally, solving such a minimax problem is challenging due to the vast action space. With a delicate analysis, prior studies proved that among aggregators who output decisions, the random dictator is optimal without adversarial experts under L1 loss, which implies that the optimal aggregator that outputs probability is simple averaging \citep{arieli2023universally}. However, this analysis does not extend to other scenarios, such as L2 loss. We show that without adversarial experts, the problem under L2 loss lacks a simple solution.

Paradoxically, introducing adversarial experts does not complicate the problem in some scenarios, it even simplifies it! Regarding the soft aggregators that output a probability, we discover that with a bounded proportion of adversaries, the simple truncated mean is optimal under L1 loss. Furthermore, in the adversarial setting with L2 loss, we provide a closed-form solution, which is unattainable in the non-adversarial setting. Both optimal aggregators fall within the family of piecewise linear functions. For hard aggregators that output a decision, a random version of the truncated mean is optimal for both L1 and L2 loss. The key insight is that the presence of at least one adversarial expert guarantees the existence of equilibria with simple formats, which are easy to construct. These equilibria enable us to design optimal aggregators with closed-form formulas. 

In summary, we introduce a novel setting that considers adversarial experts in robust information aggregation. This framework enables us to theoretically prove the optimality of the commonly used truncated mean method under L1 loss and provide optimal aggregators under L2 loss, which are piecewise linear.

\subsection{Summary of Results}
\paragraph{Theoretical Results}
In the original non-adversarial setting in \citep{arieli2023universally}, each expert will receive and report a binary private signal, either $L$ (low) or $H$ (high), indicating the binary world state $\omega\in\{0,1\}$. The experts are marginally symmetric and truthful. That is, they share the same marginal distribution of signals and will report their private signals truthfully. However, correlations may exist between the signals, thus the joint distribution is not determined. The information structure $\theta\in\Theta$ is defined by the joint distribution of private signals and world state. 

For instance, in a weather forecasting problem, the world state represents whether it will rain tomorrow. Private signals are some evidence that experts obtain through their measurement instruments. Since experts use the same type of instruments, they are symmetric and have the same level of accuracy. The correlations depends on where and when they run the instruments.

We extend the above setting to the adversarial setting. In addition to truthful experts, adversarial experts exist and report arbitrarily from the signal set $\{H, L\}$. We assume the adversaries can observe the reports of others and collude. The adversarial strategy is denoted by $\sigma\in\Sigma$. 

The rule maker aims to find the optimal aggregator $f$ to solve the minimax problem:

$$R(\Theta,\Sigma)=\inf_f\sup_{\theta\in\Theta,\sigma\in\Sigma} \E_\theta[\ell(f(\vx),\omega)-\ell(opt_{\theta}(\vx_T),\omega)].$$

We analyze the problem in two contexts: 1) soft: $f$'s output is a forecast in $[0,1]$; 2) hard: $f$'s output is a decision in $\{0,1\}$. The aggregator $f$ can be randomized in the sense that its output is random. $opt$ is the benchmark, which is an omniscient aggregator that knows the underlying information structure $\theta$ of truthful experts and minimizes the expected loss. $\vx$ is the reports of all experts, $\vx_T$ is the reports of truthful experts, and $\ell$ is a loss function. In this paper, we discuss two types of loss function, the L1 loss $\ell_1(y,\omega)=|y-\omega|$ and the L2 loss $\ell_2(y,\omega)=(y-\omega)^2$. The benchmark function $opt$ should report the maximum likelihood under L1 loss and the Bayesian posterior under L2 loss. Thus L1 loss is more suitable when we want to output decisions and L2 loss is preferred for probabilistic forecasts. Suppose there are $n$ experts in total and $k=\gamma n$ adversarial experts. The theoretical results are shown in \Cref{tab:result}. The optimal soft aggregators are deterministic, and illustrated in \Cref{fig:func1}.

\begin{table}[h]
    \centering
    \renewcommand{\arraystretch}{1.5}
    \begin{tabular}{cccc}
    \toprule
        Loss Function & Experts Model & Regret\textsuperscript{$\dagger$} & Closed-form? \\
        \midrule
        \multirow{2}{*}{L1 Loss} & Non-adversarial & $c$  &  Yes~\citep{arieli2023universally} \\ \cline{2-4}
                                & Adversarial     & $c+O\Big(\frac{\gamma}{1-2\gamma}\Big)$ &  Yes\\
        \midrule
        \multirow{2}{*}{L2 Loss} & Non-adversarial & $c+O\Big(\frac{1}{n}\Big)$ &  No, $O\Big(\frac{1}{\epsilon}\Big)$\textsuperscript{$\star$} \\ \cline{2-4}
                                & Adversarial     & $c+O\Big(\frac{\gamma}{(1-2\gamma)^2}\Big)$ &  Yes\\
    \bottomrule
    \end{tabular}
    \vspace{1ex}
    \begin{minipage}{0.9\linewidth}
        \small 
        \textsuperscript{$\dagger$} $c$ is a constant depending on the prior and the posteriors given signals. $\gamma$ is the ratio of adversarial experts and $n$ is the number of experts.\\[1ex]
        \textsuperscript{$\star$} The complexity for computing the $\epsilon$-optimal aggregator $f_\epsilon$, i.e., 
        $\max_{\theta\in\Theta} \E_\theta[\ell(f_\epsilon(\vx),\omega)-\ell(opt_{\theta}(\vx_T),\omega)]\le \min_f\max_{\theta\in\Theta} \E_\theta[\ell(f(\vx),\omega)-\ell(opt_{\theta}(\vx_T),\omega)]+\epsilon$.
    \end{minipage}
    \caption{Overview of main results.}
    \label{tab:result}
\end{table}

 \begin{figure}
    \centering
\begin{tikzpicture}

\definecolor{darkgray176}{RGB}{176,176,176}
\definecolor{dodgerblue52152219}{RGB}{52,152,219}
\definecolor{lightgray204}{RGB}{204,204,204}
\definecolor{tomato2317660}{RGB}{231,76,60}

\begin{axis}[
width=.5\textwidth,
height=.3\textwidth,
legend cell align={right},
legend style={
  font=\tiny,
  fill opacity=0.8,
  draw opacity=1,
  text opacity=1,
  at={(0.97,0.03)},
  anchor=south east,
  draw=lightgray204
},
tick align=outside,
tick pos=left,
title={Optimal Aggregator},
x grid style={darkgray176},
xlabel={Number of Experts Reporting Signal ``H''},
xmin=-0.5, xmax=10.5,
xtick style={color=black},
y grid style={darkgray176},
ylabel={Output of Aggregators},
ymin=-0.05, ymax=1.05,
ytick style={color=black}
]
\addplot [semithick, dodgerblue52152219, dashed]
table {%
0 0
1 0.1
2 0.2
3 0.3
4 0.4
5 0.5
6 0.6
7 0.7
8 0.8
9 0.9
10 1
};
\addlegendentry{Non-adversarial: Optimal Aggregator, L1 loss}
\addplot [semithick, dodgerblue52152219]
table {%
0 0
1 0
2 0
3 0.166666666666667
4 0.333333333333333
5 0.5
6 0.666666666666667
7 0.833333333333333
8 1
9 1
10 1
};
\addlegendentry{Adversarial: Optimal Aggregator, L1 loss}
\addplot [semithick, tomato2317660]
table {%
0 0.235294117647059
1 0.235294117647059
2 0.235294117647059
3 0.33710407239819
4 0.438914027149321
5 0.540723981900452
6 0.642533936651584
7 0.744343891402715
8 0.846153846153846
9 0.846153846153846
10 0.846153846153846
};
\addlegendentry{Adversarial: Optimal Aggregator, L2 loss}
\addplot [semithick, tomato2317660, dashed]
table {%
0 0.11814544
1 0.235294117647059
2 0.306075812941176
3 0.376857508235294
4 0.447639203529412
5 0.518420898823529
6 0.589202594117647
7 0.659984289411765
8 0.730765984705882
9 0.80154768
10 0.96776412
};
\addlegendentry{Non-adversarial: Optimal Aggregator, L2 loss}
\end{axis}

\end{tikzpicture}
    \caption{\textbf{Illustration of optimal soft aggregators for binary aggregation.}}
    \label{fig:func1}
\end{figure}
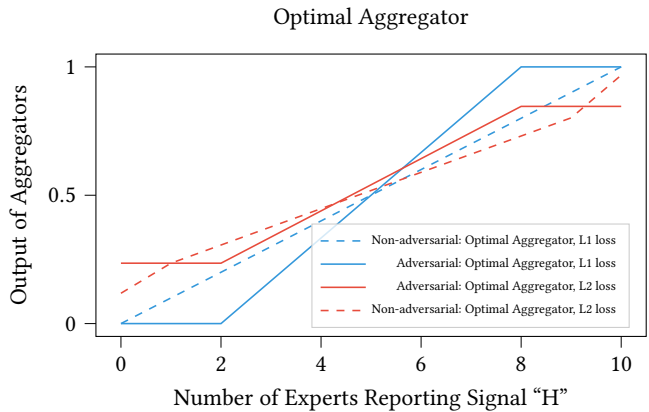

\begin{itemize}
    \item\textbf{L1 Loss Setting}
    \begin{itemize}
        \item \textbf{Adversarial.} We prove that under the L1 loss, the optimal soft aggregator is $k$-truncated mean, where $k$ is the number of adversarial experts. That is, we drop $k$ lowest reports and $k$ highest reports, then average the remaining reports. Our analysis also reveals that the regret increases asymptotically linear with adversarial ratio $\gamma$ for small $\gamma$. Moreover, the regret is independent of the number of experts $n$.
    \end{itemize}
    \item \textbf{L2 Loss Setting}
    \begin{itemize} 
        \item \textbf{Non-adversarial.} In this setting, the optimal soft aggregator remains piecewise linear, with separation points at $\{1,n-1\}$. Although we do not obtain the closed-form for the optimal aggregator, we prove that we can compute the $\epsilon$-optimal aggregator within $O(1/\epsilon)$ time. Unlike the L1 loss, the regret under the L2 loss increases with $n$. Compared to the L1 loss, the aggregator under the L2 loss is more conservative (i.e., closer to 1/2).
        \item \textbf{Adversarial.} We provide a closed-form expression for the optimal soft aggregator which is a hard sigmoid function with separation points at $\{k,n-k\}$ for small $\gamma$. The regret also increases asymptotically linear with parameter $\gamma$ and is independent of the number of experts $n$. Interestingly, if we set $\gamma=0$, the formula of the adversarial optimal aggregator cannot match the non-adversarial optimal aggregator. The reason is when $\gamma>0$, we can construct an equilibrium $(f,\theta,\sigma)$ such that the information structure $\theta$ has a zero loss benchmark. But when $\gamma=0$, we cannot construct an equilibrium $(f,\theta)$ in the same way. The worst information structure in the equilibrium may have a non-zero loss benchmark, which leads to a more complex optimal aggregator.
    \end{itemize}
\end{itemize}

We also analyze optimal hard aggregators in \Cref{apx:hard}. For both L1 and L2 loss, the optimal hard aggregators are random, whose expectation is $k$-truncated mean. We call it $k$-ignorance random dictator. It ignores k lowest-scoring and k highest-scoring experts, then randomly follows one of the remaining experts. It echos the results in \cite{arieli2023universally} that the random dictator is optimal for the non-adversarial setting.

\paragraph{Numerical Results}
In \Cref{sec:exp}, we empirically evaluate the above aggregators in an ensemble learning task, which aggregates the outputs of multiple image classifiers. These classifiers are trained by different subsets of the full training set. We utilize the cifar-10 datasets \citep{krizhevsky2009learning}. Our experiments show that the theoretically derived optimal aggregator outperforms traditional methods like majority vote and averaging, particularly under L2 loss. Under L1 loss, the majority vote also performs well. The reason can be that compared to L1 loss, L2 loss penalizes not only being wrong but also how wrong it is.

\paragraph{Extension} In \Cref{sec:multi}, we extend our results to the multi-state setting. The optimal hard aggregator is similar to the k-truncated mean. It will drop $k$ reports for each state and randomly follow a remaining aggregator. 

In \Cref{sec:extend}, we explore a more general model. In detail, we consider a broader range of information structures and experts' report space. We show that a small group of adversaries can effectively attack the aggregator. We introduce a metric to estimate the regret. Intuitively, the metric is the maximum impact $k$ experts can make regarding the benchmark aggregator $opt$. This metric allows us to establish a bound on the minimal regret, with the help of a regularization parameter of information structures.

\section{Related Work}
\paragraph{Robust Information Aggregation}
\citet{arieli2018robust} first propose a robust paradigm for the information aggregation problem, which aims to minimize the regret under the worst information structures. They mainly study the conditional independent setting. There is a growing number of research on the robust information aggregation problem. \citet{neyman2022you} also use the robust regret paradigm under the projective substitutes condition and shows that averaging is asymptotically optimal. \citet{de2021robust} consider the robust absolute paradigm and prove that we should pay more attention to the best single information source. \citet{pan2023robust} consider the optimal aggregator with second-order information. \citet{guo2024algorithmic} provide an algorithmic method to compute the near-optimal aggregator for the conditional independent information structure. We consider a different set of information structures, and more importantly, the existence of adversaries. We also obtain the exact optimal aggregator with closed forms in the adversarial setting.

Our paper is most relevant to \citet{arieli2023universally}, which considers the symmetric agents setting with the same marginal distribution. They prove the random dictator strategy is optimal under some mild conditions. We extend their results to different loss functions and the adversarial setting. Our results show that the optimal hard aggregator follows a random expert after discarding some values, which extends the random dictator strategy.


\paragraph{Adversarial Information Aggregation}
In the crowdsourcing field, some works aim to detect unreliable workers based on observed labels. They mainly consider two kinds of unreliable workers, the truthful workers but with a high error rate, and the adversarial workers who will arbitrarily assign labels. One possible approach is using the ``golden standard'' tasks, which means managers know the ground truth \citep{snow2008cheap, downs2010your, le2010ensuring}. When there are no ``golden standard'' tasks, \citet{vuurens2011much, hovy2013learning, jagabathula2017identifying, kleindessner2018crowdsourcing} detect the unreliable workers via revealed labels under some behavior models such as the Dawid-Skene model. Other works focus on finding the true labels with adversarial workers. \citet{steinhardt2016avoiding} consider a rating task with $\alpha n$ reliable workers, and others are adversarial workers. They showed that the managers can use a small amount of tasks, which is not scaled with $n$, to determine almost all the high-quality items. \citet{ma2020adversarial} solve the adversarial crowdsourcing problem by rank-1 matrix completion with corrupted revealed entries. Other works also focus on the data poison attacks in crowdsourcing platforms \citep{chen2021data,chen2023black,fang2021data,gadiraju2015understanding,miao2018attack,yuan2017sybil}. Unlike crowdsourcing, we do not assume the existence of many similar tasks and focus on the one-shot information aggregation task.

\citet{han2021truthful} and \citet{schoenebeck2021information} propose a peer prediction mechanism for a hybrid crowd containing both self-interested agents and adversarial agents. \citet{han2021truthful} focus on the binary label setting and propose two strictly posterior truthful mechanisms.   \citet{schoenebeck2021information} prove their mechanism guarantees truth-telling is a $\epsilon$-Bayesian Nash equilibrium for self-interested agents. The focus of our paper is the aggregation step, thus we do not consider the incentives of non-adversarial agents but assume they are truth-telling. 

\citet{kim2007swing} study the voting problem when there are voters with adversarial preferences. They prove that it is possible that a minority-preferred candidate almost surely wins the election under equilibrium strategies. They want to determine when the majority vote can reveal the true ground truth while we want to find a robust aggregator for any situation.

\paragraph{Robust Ensemble Learning}
Ensemble learning methods leverage the power of multiple models to improve prediction accuracy, similar to how aggregating predictions from a diverse crowd can produce better forecasts than individual opinions. The earliest work of ensemble learning can date back to the last century \citep{dasarathy1979composite}. They aim to combine different classifiers trained from different categories into a composite classifier. \cite{schapire1990strength} provides a new algorithm to convert some weak learners to strong learners. The most widely-used ensemble learning methods include bagging \citep{breiman1996bagging}, AdaBoost \citep{hastie2009multi}, random forest \citep{breiman2001random}, random subspace \citep{ho1995random}, gradient boosting \citep{friedman2002stochastic}. \citet{dong2020survey} provides a comprehensive review of ensemble learning. The main difference between ensemble learning and information aggregation is that ensemble learning is a training process and thus involves multi-round aggregation. Unlike them, our method does not assume any knowledge about the underlying learning models and only needs the final outputs of models. 

In the robust learning field, there are many works aimed at data poison attacks to improve the robustness of learning algorithms \citep{jagielski2018manipulating,zhao2017efficient,biggio2012poisoning}. In comparison, our adversaries cannot change the training data, but alter the output of learning models.

\section{Problem Statement}
We define $\{0,1,\cdots,n\}$ as $[n]$ and $\Delta_X$ as the set of all possible distributions over $X$. For a distribution $P$, $supp(P)$ denotes its support set.

Suppose the rule maker wants to determine the true state $\omega$ from a binary choice set $\Omega=\{0,1\}$. She is uninformed and asks $n$ experts for advice. Each expert will receive a signal $s_i$ from a binary space $S=\{L, H\}$ (low, high), indicating that with low (high) probability the state is 1. Then they will truthfully report $L$ or $H$ according to their signals. The binary signal assumption can be relaxed, as we can always construct a binary signal structure from a non-binary structure with the same joint distribution over binary reports \cite{arieli2023universally}.

Experts share the same prior $\mu=\Pr[\omega=1]$ and posterior given signals: $p_0=\Pr[\omega=1|s_i=L]$ and $p_1=\Pr[\omega=1|s_i=H]$. We assume $p_0<\frac{1}{2}<p_1$, otherwise the signals are not informative (e.g., if $p_0<p_1<1/2$, then any signal is a low signal). We also define their inverse probabilities $a=\Pr[s_i=H|\omega=1],b=\Pr[s_i=H|\omega=0]$. The joint distribution of true state and signals $\theta\in\Delta_{\Omega\times S^n}$ is drawn from a family $\Theta$. It is also called the information structure. Since there may exist correlations between the experts' signals, the parameters $\mu,a,b$ alone are insufficient to determine an information structure.

In the non-adversarial setting, the experts are truthful when they report their signals. We could relax this assumption to rational experts who are revenue maximizers by applying proper incentive mechanisms. Following \citet{arieli2022population}, we assume the experts are anonymous, which can be relaxed due to the symmetry of experts. Thus the rule maker only sees the number of experts reporting $H$, denoted by $x\in [n]$. Then she needs to choose a randomized aggregator $f:[n]\to\Delta_{[0,1]}$, which maps reports to a (possibly random) belief $\in[0,1]$ about being in state 1. We first focus on randomized soft aggregators and will extend the results to randomized hard aggregators $f:[n]\to\Delta_{\{0,1\}}$ in \Cref{apx:hard}.

In the adversarial setting, there are $k=\gamma n$ adversarial experts who can arbitrarily report from $\{L, H\}$. We assume they are omniscient. That is, they know the true state and reports of other truthful experts and can collude. They will follow a randomized strategy $\sigma\in \Sigma: \Omega\times S^{n-k}\to \Delta_{[k]}$ that maps truthful reports to a (random) number of additional $H$. Suppose the set of truthful experts is $T$, and the set of adversarial experts is $A$. We use $x_T$ and $x_A$ to represent the number of reports $H$ from truthful and adversarial experts, respectively.

\paragraph{Robust Aggregation Paradigm} Given the families of information structures $\Theta$ and strategies $\Sigma$, the rule maker aims to minimize the regret compared to the non-adversarial setting in the worst information structure. That is, the rule maker wants to find the optimal function $f^*$ to solve the following minimax problem:
$$R(\Theta,\Sigma)=\inf_{f}\sup_{\theta\in\theta,\sigma\in \Sigma}\E_{\theta,\sigma}[\ell(f(x),\omega)]-\E_\theta[\ell(opt_\theta(x_T),\omega)].$$

$\ell$ is a loss function regarding the output of the aggregator and the true state $\omega$. $opt_\theta(\cdot)$ is a benchmark function, outputting the optimal result given the joint distribution $\theta$ and truthful experts' reports $x_T$ to minimize the expected loss: $opt_\theta(x_T)=\argmin_{g}\E_\theta[\ell(g(x_T),\omega)].$

In this paper, we consider two commonly used loss function, L1 loss $\ell_1(y,\omega)=|y-\omega|$ and L2 loss $\ell_2(y,\omega)=(y-\omega)^2$. L2 loss will punish the aggregator less when the prediction is closer to the true state. Thus it encourages a more conservative strategy for the aggregator to improve the accuracy in the worst case. L1 loss will encourage a radical strategy to approximate the most possible state. Our techniques can be extended to any convex loss functions.

For short, we also define $$R(f,\theta,\sigma)=\E_{\theta,\sigma}[\ell(f(x),\omega)]-\E_\theta[\ell(opt_\theta(x_T),\omega)],$$ is aggregator $f$'s regret on information structure $\theta$ and adversarial strategy $\sigma$. $R(f,\Theta, \Sigma)=\sup_{\theta\in\Theta,\sigma\in\Sigma} R(f,\theta,\sigma)$ is the maximal regret of aggregator $f$ among the family $\Theta,\Sigma$. 

\section{Theoretical Results}
\label{sec:theory}
In this section, we analyze the optimal aggregators under different settings theoretically. Due to space limitations, all the proofs in this section are deferred to \Cref{apx:theory}.
\subsection{L1 loss}
We start from L1 loss. In the non-adversarial setting, \citet{arieli2023universally} prove the optimal hard aggregator is the random dictator, i.e., randomly and uniformly following an expert. With a step further, it reveals that simple averaging $f(x)=x/n$ is the optimal soft aggregator.


In the adversarial setting, we prove that the optimal aggregator is the $k$-truncated mean, when the adversary ratio $\gamma=k/n$ is upper-bounded (\Cref{thm:linear}). $k$-truncated mean discards $k$ lowest and $k$ highest reports, then outputs the average among left reports.

\begin{definition}[$k$-truncated mean]
    We call $f$ is $k$-truncated mean if 
    \begin{equation}
    f(x)=\left\{\begin{aligned}
        &1&\quad x\ge n-k\\
        &0&\quad x\le k\\
        &\frac{x-k}{n-2k}&\quad otherwise\\
    \end{aligned}
    \right 
    .
\end{equation}
\end{definition}

\begin{theorem}\label{thm:linear}
Let $\bar{\mu}=1-\mu$. When {\small$$\gamma\le \min\left(\frac{a}{1+a},\frac{1-b}{2-b},\frac{-\bar{\mu}b+\mu a}{\mu-\bar{\mu}b+\mu a},\frac{\bar{\mu}(1-b)-\mu(1-a)}{(1-b)\bar{\mu}+\bar{\mu}-\mu(1-a)}\right)$$}, the $k$-truncated mean is optimal under the L1 loss. Recall that $\mu$ is the prior, $a=\Pr[s_i=H|\omega=1]$ and $b=\Pr[s_i=H|\omega=0]$. Moreover, the regret is
$$R(\Theta,\Sigma)=\frac{(1-\gamma)\left(1-(1-\mu)(1-b)-\mu a\right)}{1-2\gamma}.$$
\end{theorem}

Intuitively, when signals are highly informative ($a=Pr[s_i = H | \omega = 1]\approx 1$ or $b = Pr[s_i = H | \omega = 0]\approx 0$), adversaries struggle to manipulate the majority of experts' opinions. This resilience to adversarial attacks results in a less strict bound ($a/(1+a)$, $(1-b)/(2-b)\approx 1/2$). On the other hand, if the signals are less informative and distinguishable ($Pr[s_i = H,\omega = 1]\approx Pr[s_i = H,\omega = 0]$ or $Pr[s_i = L,\omega = 1]\approx Pr[s_i = L,\omega = 0]$), adversaries can more easily distort the results, leading to a tighter bound ($a\mu-b\bar{\mu}\approx 0$ or $\mu(1-a)-\bar{\mu}(1-b)\approx 0$).

When $\gamma$ is sufficiently large, the $k$-truncated mean may not be optimal. Nonetheless, we can infer by the same argument that the optimal aggregator will be a constant, either 1 or 0. It means that the aggregator is uninformative regardless of experts' reports.

\paragraph{Proof Sketch} 
We prove \Cref{thm:linear} in two steps.
\begin{itemize}
    \item Lower Bound: We first construct a bad case, including the information structure and adversarial strategy $\theta_b,\sigma_b$, which establishes a lower bound $R$ of the regret for any aggregator. 
    \item Upper Bound: We construct an aggregator---the $k$-truncated mean. On the one hand, it matches the lower bound $R$ under the bad case $\theta_b,\sigma_b$. On the other hand, we prove that it possesses some special properties, and therefore, the worst-case scenario it corresponds to is $\theta_b,\sigma_b$. Thus we prove the optimality of the $k$-truncated mean.
\end{itemize}

\paragraph{If the Number of Adversaries $k$ is Unknown} To identify the optimal aggregator, it is crucial to know the parameter $k$. In practice, the exact number of adversaries may be unable to know. Instead, We may only obtain an estimator $k'$ of $k$. In this case, \Cref{lem:unknownk} shows that the regret grows asymptotically linear with the additive error $|k-k'|$.

\begin{lemma}\label{lem:unknownk}
Suppose the number of adversaries is k, then for any $k'$, the $k'$-truncated mean obtains the regret 

$$R(f_{k'},\Theta,\Sigma)=\frac{k'-k+(n-k)\left(1-(1-\mu)(1-b)-\mu a\right)}{n-2k'}.$$
\end{lemma}

\subsection{L2 loss}

We first show that without some prior knowledge of the information structure, we cannot obtain non-trivial optimal aggregator. We then show that with some partial prior knowledge, the optimal aggregators are non-trivial.

\subsubsection{Unknown Prior}  In the L1 loss setting, the optimal aggregator is independent with three parameters we defined before, the prior $\mu$ and the marginal report distributions $a,b$. However, in the L2 loss setting, the optimal aggregator also depends on these parameters. When these parameters are unknown to the rule maker, it is impossible to obtain any informative aggregator (\Cref{lem:unknown}, \Cref{lem:knownmu}).

\begin{lemma}\label{lem:unknown}
    When $\mu,a,b$ is unknown, the random guess is optimal. Formally, let $(\Theta_{\mu,a,b},\Sigma_{\mu,a,b})$ includes all information structures and adversarial strategies with parameters $\mu,a,b$ and $(\Theta,\Sigma)=\bigcup_{0\le \mu\le 1,a>b}(\Theta_{\mu,a,b},\Sigma_{\mu,a,b})$. Then $f^*(x)=1/2$ and $R(\Theta,\Sigma)=1/4$. It holds for both adversarial and non-adversarial settings.
\end{lemma}

\begin{lemma}\label{lem:knownmu}
    When $\mu$ is known but $a,b$ is unknown, the prior guess is optimal. Formally, let $(\Theta_{a,b},\Sigma_{a,b})$ includes all information structures and adversarial strategies with parameters $\mu,a,b$ and $(\Theta,\Sigma)=\bigcup_{a>b}(\Theta_{a,b},\Sigma_{a,b})$. Then $f^*(x)=\mu$ and $R(\Theta,\Sigma)=\mu(1-\mu)$. It holds for both adversarial and non-adversarial settings.
\end{lemma}

We will obtain a non-trivial result when all three parameters are known to the rule maker. In the rest of this section, we assume $\mu,a,b$ are known. 

\subsubsection{Partial Prior Knowledge: Non-adversarial Setting}
First, we discuss the non-adversarial setting. Unfortunately, obtaining a closed-form expression for the optimal aggregator is infeasible without solving a high-order polynomial equation. However, we can calculate it efficiently as \Cref{thm:alg} shows.

\begin{theorem}\label{thm:alg}
    There exists an algorithm that costs $O(1/\epsilon)$ to find the $\epsilon$-optimal aggregator $f_\epsilon$ in the non-adversarial setting. That is, define the maximal regret of $f$, $R(f,\Theta)=\max_{\theta\in\Theta}\E_{\theta}[\ell(f(x),\omega)]-\E_\theta[\ell(opt_\theta(x),\omega)]$. $f_\epsilon$ is a $\epsilon$-optimal aggregator if $$R(f_\epsilon,\Theta)\le \min_f R(f,\Theta)+\epsilon.$$
\end{theorem}

\paragraph{Proof Sketch} The key idea here is to decompose the information structures into a linear combination of several ``basic'' information structures $\theta_1,\theta_2,\cdots,\theta_k$ with $\rsupp(\theta_i)\le 4$, where $\rsupp(\theta)=supp(v_1)\cup supp(v_0)$ is the support of report sets. That is, there are at most 4 possible reports in these ``basic'' information structures. Using the convexity of the regret function, we can prove that for any $f,\theta$, the regret $R(f,\theta)$ will be less than the linear combination of $R(f,\theta_1),\cdots, R(f,\theta_k)$. Thus we only need to solve the optimization problem among these ``basic'' information structures. 

We then reduce the set of ``basic'' information structures to a constant size, which allows efficient algorithms.

\paragraph{Regret}
As we cannot obtain the closed form of the optimal aggregator, we do not know the regret either. However, it is possible to estimate the regret, as stated in \Cref{lem:reg}. \Cref{fig:regret} shows an example of the regret, which is calculated by our algorithm.

\begin{lemma}\label{lem:reg}
    Suppose $\Theta_n$ is the information structure with $n$ truthful experts. Then the non-adversarial regret $R(\Theta_n)=c+O(1/n)$, where $c$ is a value related to $\mu,a,b$.
\end{lemma}

\begin{figure}[h]
    \centering
    \includegraphics[width=0.5\textwidth]{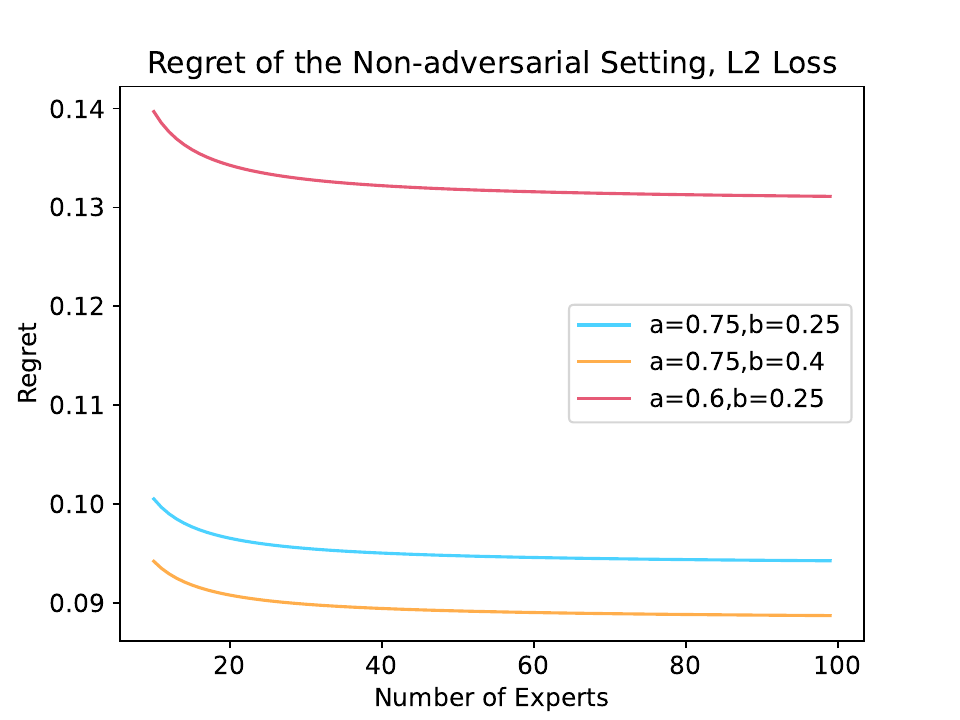}
    \caption{Illustration for the regret under non-adversarial setting, L2 loss. We fix $\mu=0.5$ and vary the number of experts. The parameters $a,b$ are shown in the legend.}
    \label{fig:regret}
\end{figure}

\subsubsection{Partial Prior Knowledge: Adversarial Setting}
Now we consider the adversarial setting. Surprisingly, for low adversary ratio $\gamma$, the optimal aggregator has a closed form, which is also a hard sigmoid function with separation points in $\{k,n-k\}$. We state the optimal aggregator and the regret in \Cref{thm:optL2}.

\begin{theorem}\label{thm:optL2}
    When $0<\gamma\le\min(\frac{a}{1+a},\frac{1-b}{2-b})$, the optimal aggregator is 
        \begin{equation}
    f^*(x)=\left\{\begin{aligned}
        &\frac{\mu(1-\gamma)(1-a)}{\mu(1-\gamma)(1-a)+(1-\mu)(1-2\gamma-(1-\gamma)b)}&x\le k\\
        &\frac{\mu((1-\gamma)a-\gamma)}{\mu((1-\gamma)a-\gamma)+(1-\mu)(1-\gamma)b}&x\ge n-k\\
        &\frac{x-k}{n-2k}(f(n-k)-f(k))+f(k)& otherwise\\
    \end{aligned}
    \right 
    .
\end{equation}
Moreover, the regret $R(\Theta,\Sigma)$ is 
\begin{align*}
    &(-1 + \gamma) (-1 + \mu) \mu\\
    *&\left(b (-1 + b + 2 \gamma - b \gamma) - (-1 + a + b) \left(b + a (-1 + \gamma) + \gamma - b \gamma\right) \mu\right)\\
    /&\left(b (-1 + \gamma) (-1 + \mu) + a \mu - (1 + a) \gamma \mu\right)\\
    /&\left(-1 + b (-1 + \gamma) (-1 + \mu) + a \mu - \gamma (-2 + \mu + a \mu)\right)
\end{align*}
\end{theorem}

\paragraph{Proof Sketch} Similar to \Cref{thm:linear}, we prove this theorem by directly constructing an equilibrium $(f^*,\theta^*,\sigma^*)$. On the one hand, $\theta^*,\sigma^*$ provide a lower bound for the regret. On the other hand, $f^*$ is the best response to $\theta^*,\sigma^*$. In addition, for a special family of aggregators $f$ including $f^*$, $(\theta^*,\sigma^*)$ is their corresponding worst case. Thus $f^*$ is optimal.

\paragraph{Discussion}
If we simply substitute $\gamma = 0$ into the formula in \Cref{thm:optL2}, we can not obtain the optimal aggregator for the non-adversarial setting as \Cref{thm:alg} computes. Thus the adversarial setting has some essential differences from the non-adversarial setting. We will use an example to show the reason. We first consider the non-adversarial setting. Suppose there are 5 experts and we select the average aggregator $f(x) = \frac{x}{n}$. Assume $a = \frac{3}{5}$, $b = \frac{2}{5}$, and $\mu = \frac{1}{2}$.

\begin{example}[Examples of the Non-adversarial Setting]
Consider two information structures $\theta_1,\theta_2$.

\begin{align*}
\theta_1: \Pr_{\theta_1}[x|\omega=1] = 
\begin{cases}
    1/2 & x=1,5 \\
    0 & \text{else } \\
    \end{cases}
    & \quad
    \Pr_{\theta_1}[x|\omega=0] =
    \begin{cases}
    1/2 & x=0,4 \\
    0 & \text{else } \\
    \end{cases}
\end{align*}
\begin{align*}
\theta_2: \Pr_{\theta_2}[x|\omega=1] =
    \begin{cases}
    1/2 & x=1,5 \\
    0 & \text{else } \\
    \end{cases}
    & \quad
    \Pr_{\theta_2}[x|\omega=0] =
    \begin{cases}
    3/5 & x=0 \\
    2/5 & x=5 \\
    0 & \text{else } \\
    \end{cases}
\end{align*}

\end{example}
In the information structure $\theta_1$, the loss of the aggregator is
$\frac{1}{2} \left( \frac{1}{2} \left(1 - \frac{1}{5}\right)^2 + \frac{1}{2} \left(\frac{4}{5}\right)^2 \right) = \frac{8}{25}$
and the loss of the benchmark is 0.  In the information structure $\theta_2$, the loss of the aggregator is
$\frac{1}{2} \left( \frac{1}{2} \left(1 - \frac{1}{5}\right)^2 + \frac{2}{5} \cdot 1^2 \right) = \frac{9}{25},$
which is greater than in $\theta_1$. However, the loss of the aggregator is also greater than in $\theta_1$. Notice that the regret is the difference between the loss of the aggregator and the loss of the benchmark. Therefore, for the average aggregator, we cannot easily determine which of $\theta_1$ or $\theta_2$ has the greater regret. In fact, the worst information structure corresponding to simple averaging is a mixture of some information structures. That is why we need to solve it using an algorithm.

Surprisingly, when we add one adversary, the situation becomes simpler. We consider the information structure $\theta_1$ and adversarial strategy $\sigma(5) = \sigma(1) = 0, \sigma(4) = \sigma(0)=1$. In this case, we obtain highest loss of the aggregator while keeping the zero loss benchmark. Thus it is easier to determine the worst information structure in the adversary setting.

\section{Extension: Multi-state Setting}
\label{sec:multi}
In this section, we extend our model to multi-states: $|\Omega|=m>2$. Experts share the common prior: $\mu\in\Delta_\Omega$. We assume they also share the same correctness level: $q_\omega=Pr[s_i=\omega|\omega]$, which is the probability of receiving the private signal $\omega$ when the true state is $\omega$. We do not assume the distribution of the wrong signals they will receive.

We consider the hard aggregator, where the aggregator also outputs a state. In this setting, the L1 loss and L2 loss are equivalent, and the regret depends solely on whether the aggregator predicts the correct state. We assume the input $\vx\in R^m$ is the histogram of the reports, meaning $\vx(\omega)$ denotes the number of experts reporting the state $\omega$.

In the binary setting, the optimal aggregator removes $k$ reports of each state and randomly follows a remaining report. We prove that, in the multi-state setting, the optimal aggregator is similar when the adversarial ratio is low as \Cref{thm:multi} shows.

\begin{definition}[$k$-ignorance random dictator]
    We call $f$ is $k$-ignorance random dictator if 

    $$f(\vx)=\omega\textit{\ with\ probability\ } \frac{t_k(\omega)}{\sum_{\omega'}t_k(\omega')},$$

    where $t_k(\omega)=\max(0,\vx(\omega)-k)$.
\end{definition}

For example, consider three states: L, M, and H. Initially, 10 experts report L, 5 report M, and 2 report H. With $k=3$ adversaries, we remove 3 reports from each state, leaving 7 L, 2 M, and 0 H. $k$-ignorance random dictator would report L with probability $7/9$ and M with probability $2/9$.

\begin{theorem}\label{thm:multi}
    For the multi-state setting, when $\gamma$ satisfies

    \begin{itemize}
        \item $\gamma<\min\left(\frac{q_\omega}{q_\omega+1},\frac{1-q_\omega}{m-1-q_\omega},\frac{1}{m}\right)$ for any $\omega\in\Omega$.
        \item \begin{align*}&\left(\mu(\omega)+\mu(\omega)q_\omega+\mu(\omega')q_{\omega'}-\mu(\omega')(m-1)\right)\gamma\\
        &< \mu(\omega)q_\omega-\mu(\omega')q_{\omega'}+\mu(\omega')
    \end{align*}
    for any $\omega\ne\omega'\in\Omega$.
    \end{itemize} 
    Then $k$-ignorance random dictator is optimal. The regret is 
    $$R(\Theta,\Sigma)=\frac{1-(m-1)\gamma-(1-\gamma)(\sum_\omega \mu(\omega)q_\omega)}{1-m\gamma}.$$
\end{theorem}

The proof follows a similar approach to the binary setting. The main distinction is the construction of the corresponding worst-case information structures $\theta^*$ and adversarial strategies $\sigma^*$. We defer the full proof to \Cref{apx:multi}.

\section{Numerical Experiment}
\label{sec:exp}
This section evaluates our aggregators in ensemble learning, where we combine multiple models' predictions to achieve higher accuracy. There exist data poisoning attacks \citep{jagielski2018manipulating} in the ensemble learning process in practice, which corresponds to the adversarial setting. The theory has already provided the worst-case analysis. The experiment focuses on the average performance with specific adversarial strategies, which reflect real-world situations and are feasible. 

\subsection{Setup}
We now apply our framework to ensemble learning for image classification.
\begin{itemize}
    \item\textbf{World State $\omega$:} it is the true class $y_j$ of the data point $d_j$. 
    \item\textbf{Expert $i$:} it is a black-box machine learning model $M_i$, which will take the data point $d_j$ as the input and output a prediction for its class $M_i(d_j)$.
    \item\textbf{Signal $s_i$:} suppose we have a training dataset $\mathcal{D}$. For each model $M_i$, the signal $s_i$ is defined by its training dataset $\mathcal{D}_i\subset\mathcal{D}$.
    \item\textbf{Report $x_i$:} it is model $M_i$'s output class. We do not consider the confidence of models.
    \item\textbf{Benchmark Function $opt$:} it is defined by the best model trained using the full dataset $\mathcal{D}$.
\end{itemize}

In our experiment, we utilized the CIFAR-10 dataset, which comprises images across 10 distinct classes. To adapt this multi-class dataset for our binary signals framework, our task is to determine whether the image belongs to a special class (e.g. cat) or not. To ensure the symmetric assumption and keep the diversity of the models, we train 100 models using a consistent machine learning backbone according to \citet{David2018}. For each model $M_i$, we construct the sub-dataset $\mathcal{D}_i$ by uniformly sampling 20000 images from the original training dataset, which contains 50000 images. We train 10 epochs by GPU RTX 3060 and CPU Intel i5-12400. The average accuracy of models is around 85\%.

\paragraph{Estimation For Parameters}
To apply our aggregator in \Cref{thm:optL2}, we need three important parameters, the prior $\mu$, the probability $a$ of the ``yes'' answer when the true label is ``yes'', and the probability $b$ of the ``yes'' answer when the true label is ``no''. In practice, it is impossible to know the true value due to the imperfect knowledge of the data distribution. Instead, we can estimate them by the empirical performance of models in the training set. If the training set is unbiased samples from the true distribution, the empirical estimator is also unbiased for these true parameters.

\paragraph{Adversary Models}
We test two different kinds of adversarial strategies. First is the extreme strategy. That is, the adversaries always report the opposite forecast to the majority of truthful experts. Second is the random strategy, which will randomly report ``yes'' or ``no'' with equal probability.

\paragraph{Aggregators}
We compare our aggregators to two benchmarks: the majority vote-outputting the answers of the majority of experts; and the averaging-outputing the ratio of models answering ``yes''.

\subsection{Results}
We evaluated the performance of different aggregators across a range of adversaries and \Cref{fig:learning2} shows our results for L2 loss. For the random adversaries, we sample 50 independent groups of adversarial experts and draw an error bar of standard deviation. More results are presented in \Cref{sec:exp1}. Our piecewise linear aggregator (\Cref{thm:optL2}) outperforms other aggregators in any situation. Notably, the majority has a close performance to our aggregator, which means it is a good approximation in the ensemble learning setting. This is mainly due to the high accuracy of each model. Since each model has accuracy of around 90\%, around 90\% models will choose the right label in expectation. Even for large number of adversaries, the majority is still correct. When the group of adversaries is small, the averaging performs well. However, when the group becomes large, the effectiveness of averaging significantly diminishes. Thus averaging is very sensitive to the number of adversaries. 

When the adversaries use the random strategy, all aggregators generally perform better. In particular, the majority vote will not be affected by the adversaries in expectation since it will not change the majority. When the number of adversaries is small, the averaging even outperforms our aggregator, as our aggregator is overly conservative in this case. 

\Cref{fig:acc} illustrates the accuracy of various aggregation methods. The benchmark aggregator, which represents the model trained on the full dataset, achieves approximately 98\% accuracy. The k-Truncated Random Select, a randomized variant of the k-Truncated Mean, performs similarly to the majority vote, both reaching around 96\% accuracy. In contrast, the Random Select, which randomly follows one model, performs poorly, especially when the number of adversaries increases.

\begin{figure}[htbp]
    \centering
    \begin{subfigure}[b]{0.45\columnwidth}
    \includegraphics[width=\textwidth]{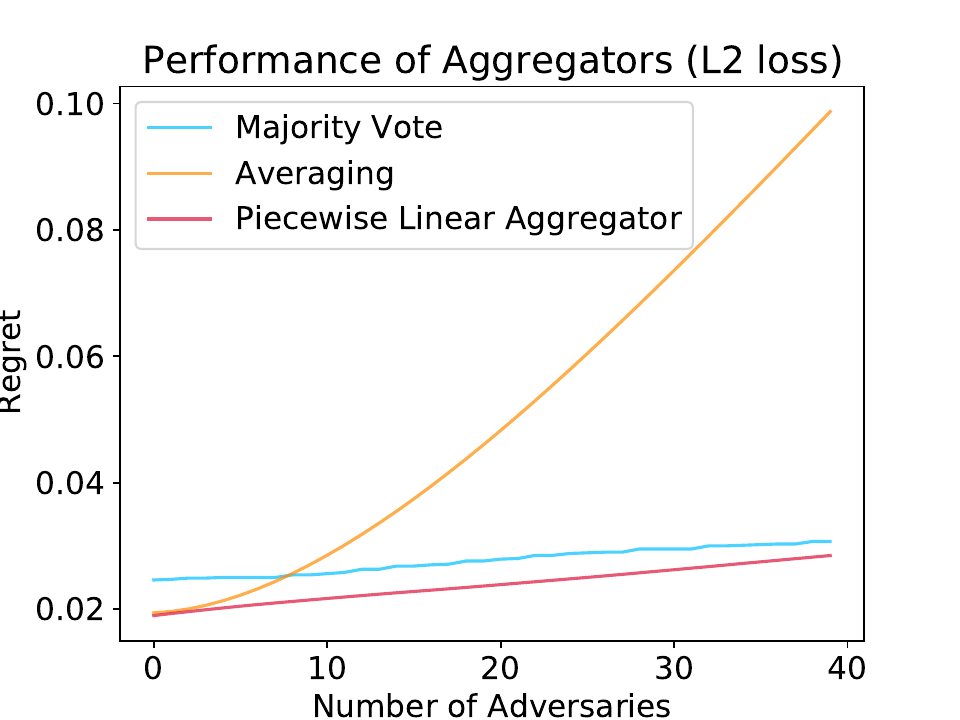}
    \caption{\textbf{Extreme Strategy}}
    \end{subfigure}
    \begin{subfigure}[b]{0.45\columnwidth}
    \includegraphics[width=\textwidth]{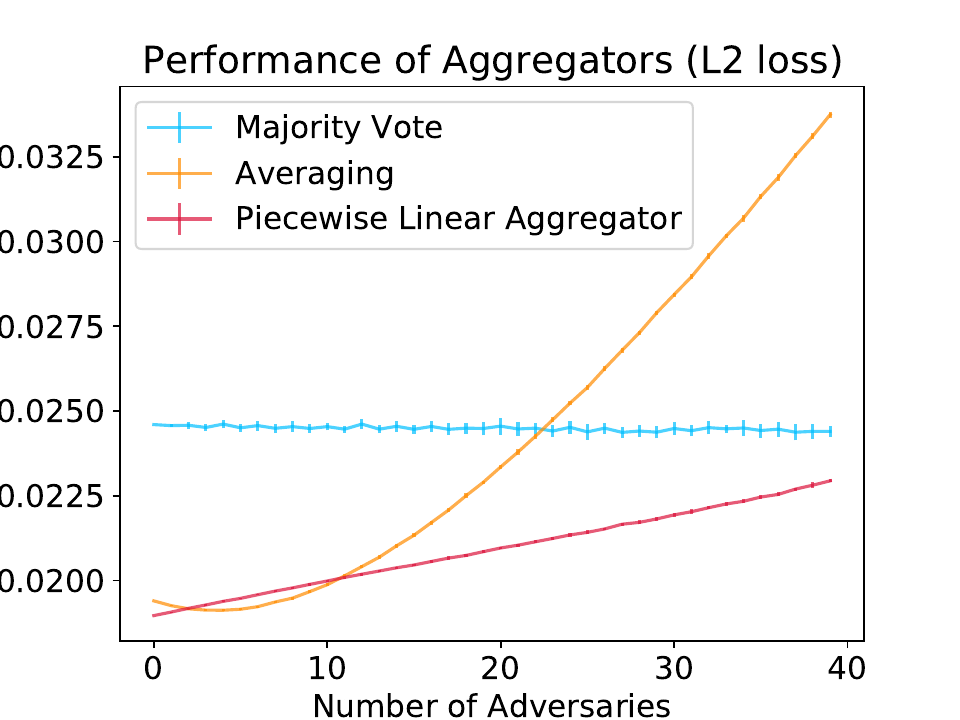}
    \caption{\textbf{Random Strategy}}
    \end{subfigure}
    \caption{The performance of different aggregators under different adversarial strategies. The x-axis is the number of adversaries we add. The number of truthful experts is 100. The y-axis is the regret.}
    \label{fig:learning2}
\end{figure}

\begin{figure}[h]
    \centering
    \includegraphics[width=0.5\textwidth]{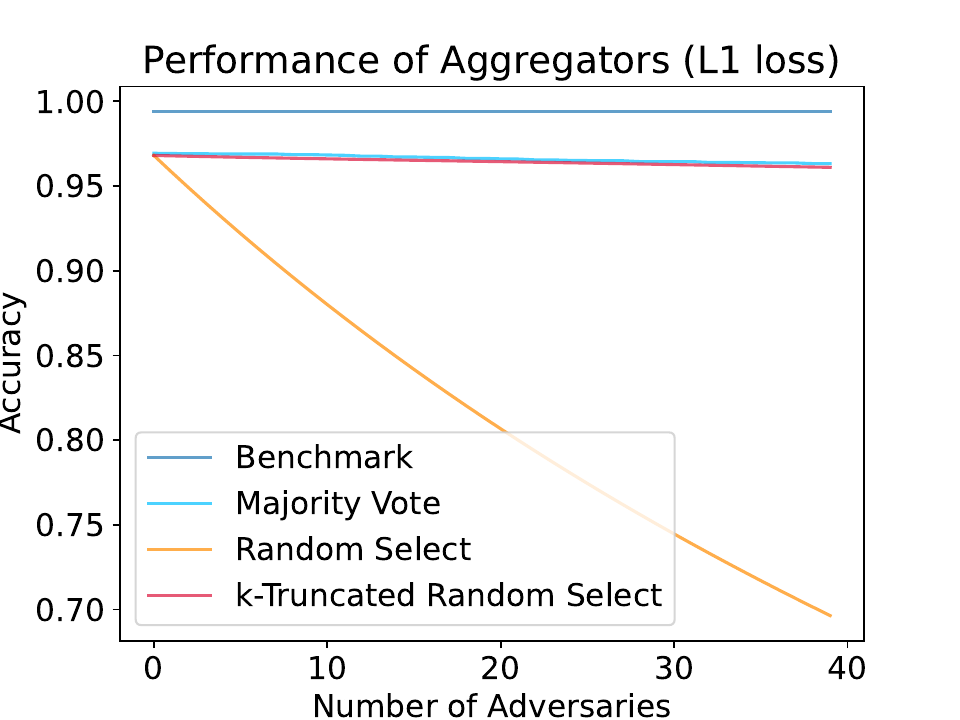}
    \caption{The accuracy of different aggregators under extreme strategy. The x-axis is the number of adversaries we added to experts. The number of truthful experts is 100. The y-axis is the accuracy.}
    \label{fig:acc}
\end{figure}

\section{Conclusion}
\label{sec:con}
We analyze the robust aggregation problem under both truthful and adversarial experts. We show that the optimal aggregator is piecewise linear across various scenarios. In particular, the truncated mean is optimal for L1 loss. We evaluate our aggregators by an ensemble learning task. We also extend the truncated mean aggregator into the multi-state setting. For the general setting with more flexible information structures and experts' reports space, we provide some negative results that the optimal aggregator is vulnerable to adversarial experts. 

For future work, it would be interesting to extend the decision aggregation into other aggregation setting such as the forecast aggregation, where the adversarial strategies are more diverse. Another possible direction is exploring the performance of the truncated mean in other general information structures, such as the substitute information structure \citep{neyman2022you}. Moreover, we wonder what if we connect behavioral economics by considering other types of experts, such as experts who only report truthfully with some probability.

\begin{acks}
This work was supported by NSFC/RGC Joint Research Scheme under Grant 62261160391 and Grant N\_PolyU529/22.
\end{acks}

\bibliographystyle{ACM-Reference-Format}
\balance
\bibliography{reference}

\appendix
\section{Auxiliary Tools}
In this section, we first give some useful notations. We define two important distributions $v_1,v_0\in \Delta_{[n]}$, which are the distributions of reports conditioning on the world state:
$v_1(t)=\Pr_{\theta,\sigma}[x=t|\omega=1],v_0(t)=\Pr_{\theta,\sigma}[x=t|\omega=0]$. Notice that $v_1,v_0$ are related to both $\theta$ and $\sigma$. Similarly, we define $u_1(t)=\Pr_\theta[x_T=t|\omega=1],u_0(t)=\Pr_\theta[x_T=t|\omega=0]$ which are the conditional distributions of truthful experts' reports. Notice that $u_1,u_0\in \Delta_{[n-k]}$.

The expected loss of the aggregator can be decomposed by $v_1,v_0$: $$\E_{\theta,\sigma}[\ell(f(x),\omega)]=\mu E_{x\sim v_1} [\ell(f(x))]+(1-\mu) E_{x\sim v_0} [\ell(1-f(x))].$$ By simple calculation, we can obtain the closed-form of the benchmark $opt_\theta(x)$ under different loss function $\ell$.

\textbf{L1 loss}
\begin{equation*}
    opt_\theta(x_T)=\left\{\begin{aligned}
        1&\quad \mu u_1(x_T)\ge(1-\mu)u_0(x_T)\\
        0&\quad otherwise\\
    \end{aligned}
    \right 
    .
\end{equation*}

The expected loss is $1-\sum_x\max(\mu u_1(x),(1-\mu)u_0(x))$.

\textbf{L2 loss}
$$opt_\theta(x_T)=\frac{\mu u_1(x_T)}{\mu u_1(x_T)+(1-\mu)u_0(x_T)}.$$

The expected loss is $\sum_x \frac{\mu(1-\mu)u_1(x)u_0(x)}{\mu u_1(x)+(1-\mu)u_0(x)}$

Before considering the optimal aggregator, we first characterize all feasible information structures. \Cref{lem:feasible1} fully formalizes the distribution $u_1,u_0$, i.e., the possible reports of $n-k$ truthful experts.

\begin{lemma}[\citep{arieli2023universally}]\label{lem:feasible1}
For any distribution $u_1,u_0\in\Delta_{[n-k]}^2$, there exists an information structure $\theta$ such that $u_1(t)=\Pr_\theta[x_T=t|\omega=1],u_0(t)=\Pr_\theta[x_T=t|\omega=0]$ if an only if $\E_{x\sim u_1}[x]=(n-k)a$ and $\E_{x\sim u_0}[x]=(n-k)b$.
\end{lemma}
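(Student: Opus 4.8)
The plan is to prove both directions by linking the per-expert marginal constraints forced by symmetry to the first moments of the count distributions, and then to realize any admissible pair $(u_1,u_0)$ through an exchangeable ``lift'' of the count laws to signal profiles. Note that $u_1,u_0$ concern only the $n-k$ truthful experts, so it suffices to work with the joint law of $\omega$ and the truthful signal profile $\mathbf{s}\in\{L,H\}^{n-k}$; the adversarial reports are governed separately by $\sigma$ and do not enter $u_1,u_0$.

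For necessity, I would suppose $\theta$ realizes $u_1,u_0$. Since the truthful experts are symmetric, every $i\in T$ shares the conditional marginal $\Pr_\theta[s_i=H\mid\omega=1]=a$. Writing $X_T=\sum_{i\in T}\mathbbm{1}(s_i=H)$ and applying linearity of expectation conditional on $\omega=1$ gives $\E_{x\sim u_1}[x]=\sum_{i\in T}\Pr_\theta[s_i=H\mid\omega=1]=(n-k)a$, and symmetrically $\E_{x\sim u_0}[x]=(n-k)b$. This direction uses only that the family $\Theta$ pins the single-expert marginals to $a$ and $b$.

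For sufficiency, given any $u_1,u_0\in\Delta_{[n-k]}$ with $\E_{x\sim u_1}[x]=(n-k)a$ and $\E_{x\sim u_0}[x]=(n-k)b$, I would build an exchangeable joint distribution. Fix $\Pr_\theta[\omega=1]=\mu$, and for each profile $\mathbf{s}$ containing exactly $x$ signals equal to $H$ set $\Pr_\theta[\mathbf{s}\mid\omega=1]=u_1(x)/\binom{n-k}{x}$ and $\Pr_\theta[\mathbf{s}\mid\omega=0]=u_0(x)/\binom{n-k}{x}$. These are valid conditional laws: the $\binom{n-k}{x}$ profiles with $x$ H's split the mass $u_1(x)$ (resp. $u_0(x)$) equally, so summing over all profiles recovers $\sum_x u_1(x)=1$. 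By construction $X_T$ has law $u_1$ given $\omega=1$ and $u_0$ given $\omega=0$.

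The only place the mean hypothesis is really needed — and the single point worth checking carefully — is consistency with the family $\Theta$, i.e. that the construction's single-expert marginals equal the prescribed $a,b$. By exchangeability $\Pr_\theta[s_i=H\mid X_T=x,\omega=1]=x/(n-k)$, so $\Pr_\theta[s_i=H\mid\omega=1]=\sum_x u_1(x)\,x/(n-k)=\E_{x\sim u_1}[x]/(n-k)=a$, and likewise $\Pr_\theta[s_i=H\mid\omega=0]=b$; together with $\mu$ this fixes $p_0,p_1$ via Bayes' rule to the family's values, giving $\theta\in\Theta$. I do not anticipate a genuine obstacle: the content is the observation that matching the prescribed means is exactly equivalent to matching the first moments forced by the symmetric marginals, and the main care goes into the bookkeeping of the exchangeable lift.
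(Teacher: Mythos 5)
Your proof is correct. The paper itself contains no proof of this lemma---it is imported directly from \citet{arieli2023universally}---and your argument (necessity by linearity of expectation applied to the fixed symmetric conditional marginals $a$ and $b$; sufficiency by the exchangeable lift that splits the mass $u_1(x)$, resp.\ $u_0(x)$, uniformly over the $\binom{n-k}{x}$ signal profiles and then checks via $\Pr_\theta[s_i=H\mid X_T=x,\omega]=x/(n-k)$ that the single-expert marginals, and hence $\mu,p_0,p_1$, match the family's parameters) is precisely the standard construction behind the cited result, so nothing needs to be added.
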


The following lemma is an extension of \Cref{lem:feasible1}. It shows that there is also some restriction on the expectation of the distribution of inputs for adversarial experts and truthful experts.

\begin{lemma}\label{lem:feasible2}
For any information structure $\theta$ and adversarial strategy $\sigma$, the corresponding $v_1,v_0$ satisfy $(n-k)a\le \E_{x\sim v_1}[x]\le (n-k)a+k$ and $(n-k)b\le \E_{x\sim v_0}[x]\le (n-k)b+k$.
\end{lemma}

\begin{proof}
By \ref{lem:feasible1}, $\E_{x\sim u_1}[x]=(n-k)a$ and $\E_{x\sim u_0}[x]=(n-k)b$. Consider $\E_{x\sim v_1}[x]-\E_{x\sim u_1}[x]$, which is the expectation of reports of adversaries. Since there are at most $k$ adversaries. Then $0\le \E_{x\sim v_1}[x]-\E_{x\sim u_1}[x]\le k$. Thus $(n-k)a\le \E_{x\sim v_1}[x]\le (n-k)a+k$. Similarly, $(n-k)b\le \E_{x\sim v_0}[x]\le (n-k)b+k$.
\end{proof}

Now we give a useful lemma for our main theorems.

\begin{definition}[A Bad Information Structure]
If $k<(n-k)a,(n-k)b<n-2k$, we define a bad information structure $\theta_b,\sigma_b$ such that

$$\Pr_{\theta_b}[x=n-k|\omega=1]=\frac{(n-k)a-k}{n-2k}$$
$$\Pr_{\theta_b}[x=k|\omega=1]=\frac{n-k-(n-k)a}{n-2k}$$
$$\Pr_{\theta_b}[x=n-2k|\omega=0]=\frac{(n-k)b}{n-2k}$$
$$\Pr_{\theta_b}[x=0|\omega=0]=\frac{n-2k-(n-k)b}{n-2k}$$

and 
\begin{equation}
    \sigma_b(x)=\left\{\begin{aligned}
        &k&\quad x=0,n-k\\
        &0&\quad otherwise\\
    \end{aligned}
    \right 
    .
\end{equation}

\end{definition}

\begin{lemma}\label{lem:opt_agg}
    If the aggregator $f$ satisfies the following conditions:
    \begin{itemize}
        \item non-decreasing
        \item $\ell(f(x),0)$ is convex in $[k,n-k]$
        \item $\ell(f(x),1)$ is convex in $[k,n-k]$
        \item $f(x)$ is constant in $[0,k]$ and $[n-k,k]$.
    \end{itemize}
    Then $R(f,\Theta,\Sigma)=R(f,\theta_b,\sigma_b)$.
\end{lemma}

\begin{proof}
\begin{align*}
    \E_{\theta,\sigma}[\ell(f)]&=\mu\sum_{x} \E_{\sigma}[v_1(x)\ell(f(x+\sigma(x)),1)]\\
    & +(1-\mu)\sum_{x} \E_{\sigma}[v_0(x)\ell(f(x+\sigma(x)),0)]\\
    &\le \mu\sum_{x} u_1(x)\ell(f(x),1)+ (1-\mu)\sum_x u_0(x)\ell(f(x+k),0)\tag{$f$ is non-decreasing}\\ 
    &=\mu\sum_{n-k\ge x\ge k} u_1(x)(\alpha(x)k\\
    &+(1-\alpha(x))(n-k)\ell(f(x),1)\\
    &+\mu\sum_{x<k} u_1(x)\ell(f(x),1)\tag{$\alpha(x)=\frac{n-k-x}{n-2k}$}\\
    &+(1-\mu)\sum_{x\le n-2k} u_0(x)(\beta(x)0\\
    &+(1-\beta(x))(n-2k))\ell(f(x+k),0)\\
    &+(1-\mu)\sum_{x>n-2k} u'_0(x)\ell(f(x),0)\tag{$\beta(x)=\frac{n-2k-x}{n-2k}$}\\
    &\le \mu\sum_{x\in\{k,n-k\}} u'_1(x)\ell(f(x),1)+\mu\sum_{x<k} u_1'(x)\ell(f(x),1)\tag{$\ell(f(x),1)$ is convex in $[k,n-k]$}\\
    &+(1-\mu)\sum_{x\in\{0,n-2k\}} u'_0(x)\ell(f(x+k),0)\\
    &+ (1-\mu)\sum_{x>n-2k} u'_0(x)\ell(f(x+k),0)\tag{$\ell(f(x),0)$ is convex in $[k,n-k]$}\\
\end{align*}

where $u_1'(x)=u_1(x)$ for any $x<k$, $u_0'(x)=u_0(x)$ for any $x>n-2k$
$$u_1'(k)=\sum_{k\le x\le n-k} \alpha(x)u_1(x),$$
$$u_1'(n-k)=\sum_{k\le x\le n-k} (1-\alpha(x))u_1(x).$$
$$u_0'(0)=\sum_{0\le x\le n-2k} \beta(x)u_0(x),$$
$$u_1'(n-2k)=\sum_{0\le x\le n-2k} (1-\beta(x))u_0(x).$$

By simple calculation we have $\sum_x u_1'(x)=1, \sum_x x u_1'(x)=(n-k)a$ and $\sum_x u_0'(x)=1, \sum_x x u_0'(x)=(n-k)b$.

Now we calculate the maximum of 
$$\sum_{x\in\{k,n\}} u_1'(x)\ell(f(x),1)+\sum_{x<k} u_1'(x)\ell(f(x),1)$$

subject to
$$\sum_x u_1'(x)=1$$
$$\sum_x xu_1'(x)=(n-k)a$$

Notice that $f(x)$ is constant when $x\le k$. For any $x\le k$, we write $\alpha(x)=\frac{n-x}{n-2k}$, then 
\begin{align*}
    u_1'(x)\ell(f(x),1)&=u_1'(x)(\alpha(x)\ell(f(x),1)+(1-\alpha(x))\ell(f(x),1))\\
    &\le u_1'(x)(\alpha(x)\ell(f(k),1)+(1-\alpha(x))\ell(f(n-k),1) \tag{$f(x)=f(k)\le f(n-k)$ for $x\le k$}
\end{align*}

Thus we can replace any $x<k$ with the linear combination of $k,n-k$ until there are only reports $k,n-k$ left. That is,

\begin{align*}
    &\sum_{x\in\{k,n-k\}} u_1'(x)\ell(f(x),1)+\sum_{x<k} u_1'(x)\ell(f(x),1)\\
    \le& \sum_{x=\{k,n-k\}} u''_1(x)\ell(f(x),1)
\end{align*}

where
$$\sum_x u_1''(x)=1$$
$$\sum_x xu_1''(x)=(n-k)a$$

Similarly, we have 

\begin{align*}
    &\sum_{x\in\{0,n-2k\}} u'_0(x)\ell(f(x),0)+ (1-\mu)\sum_{x>n-k} u'_0(x)\ell(f(x),0)\\
    \le& \sum_{x=\{0,n-2k\}} u''_0(x)\ell(f(x),0)
\end{align*}

where
$$\sum_x u_0''(x)=1$$
$$\sum_x xu_0''(x)=(n-k)b$$

In fact, $u_1'',u_0''$ is the same as $\theta_b$. Thus $R(f,\theta,\sigma)\le R(f,\theta_b,\sigma_b)$ for any $\theta,\sigma$, which completes our proof.
\end{proof}

\section{Omitted Proofs in Section 4}
    \label{apx:theory}

\subsection{Proof of \Cref{thm:linear}}
On the one hand, it is easy to verify that $k$-truncated mean satisfies the condition in \Cref{lem:opt_agg}. Thus $R(f^*,\theta_b,\sigma_b)=R(f^*,\Theta,\Sigma)$

On the other hand, we have 

\begin{equation}
    \Pr_{\theta_b,\sigma_b}[x|\omega=1]=\left\{\begin{aligned}
        &\frac{(n-k)a-k}{n-2k}&\quad x=n-k\\
        &\frac{(n-k)(1-a)}{n-2k}&\quad x=k\\
        &0&\quad otherwise\\
    \end{aligned}
    \right 
    .
\end{equation}

 and 
 
\begin{equation}
    \Pr_{\theta_b,\sigma_b}[x|\omega=0]=\left\{\begin{aligned}
        &\frac{(n-k)b}{n-2k}&\quad x=n-2k\\
        &\frac{n-2k-(n-k)b}{n-2k}&\quad x=0\\
        &0&\quad otherwise\\
    \end{aligned}
    \right 
    .
\end{equation}

Since $\mu u_1(n-k)\ge (1-\mu)u_0(n-2k)$ and $\mu u_1(k)\le (1-\mu)u_0(0)$, the $opt_{\theta_b,\sigma_b}(n-k)=1,opt_{\theta_b,\sigma_b}(k)=0$.

Thus $R(f,\Theta,\Sigma)\ge R(f,\theta_b,\sigma_b)\ge R(f^*,\theta_b,\sigma_b)$.

Combine these two claims we complete our proof.

\subsection{Proof of \Cref{lem:unknownk}}
\begin{proof}
    On the one hand, by \Cref{lem:feasible2}, $(n-k)a\le \E_{x\sim v_1}[x]$ and 
    $\E_{x\sim v_0}[x]\le (n-k)b+k$, then 
    \begin{align*}
        R(f^*_t)&=\mu(1-\E_{x\sim v_1}f^*_t(x))+(1-\mu)\E_{x\sim v_0}f^*_t(x)\\
        &\le \mu(1-\frac{(n-k)a}{n-2k'})+(1-\mu)\frac{(n-k)b+k}{n-2k'}\\
        &=\frac{k'-k+(n-k)\left(1-(1-\mu)(1-b)-\mu a\right)}{n-2k'}
    \end{align*}
    On the other hand, let 
    \begin{equation}
    \Pr_{\theta_b,\sigma_b}[x|\omega=1]=\left\{\begin{aligned}
        &\frac{(n-k)a-k}{n-2k}&\quad x=n-k\\
        &\frac{(n-k)(1-a)}{n-2k}&\quad x=k\\
        &0&\quad otherwise\\
    \end{aligned}
    \right 
    .
\end{equation}

 and 
 
\begin{equation}
    \Pr_{\theta_b,\sigma_b}[x|\omega=0]=\left\{\begin{aligned}
        &\frac{(n-k)b}{n-2k}&\quad x=n-2k\\
        &\frac{n-2k-(n-k)b}{n-2k}&\quad x=0\\
        &0&\quad otherwise\\
    \end{aligned}
    \right 
    .
\end{equation}

We have $R(f^*_t,\theta,\sigma)=\frac{k'-k+(n-k)\left(1-(1-\mu)(1-b)-\mu a\right)}{n-2k'}$, which completes our proof.

\end{proof}

\subsection{Proofs of \Cref{lem:unknown} and \Cref{lem:knownmu}}

\begin{proof}[Proof of \Cref{lem:unknown}]
    We first consider the non-adversarial setting. On the one hand, since $\ell(1/2)=1/4$, for any $\theta,\sigma$,
    \begin{align*}
        R(f^*,\theta,\sigma)&\ge \Pr_\theta[\omega=1]\ell(1/2)+\Pr_\theta[\omega=0]\ell(1/2)\\
        &=1/4
    \end{align*}
    
    Thus $R(f^*,\Theta,\Sigma)=1/4$.

    On the other hand, we select $x_1=0,x_2=1,x_3=n-1,x_4=n$, and $\mu=0.5$. Consider the mixture of two information structures with uniform distribution. In the first one, $supp(v_1)=\{x_1,x_4\},supp(v_0)=\{x_2,x_3\}$. In the second one, $supp(v_1)=\{x_2,x_3\},supp(v_0)=\{x_1,x_4\}$. Let $b=2/n$, consider a sequence $\{a_t\}$ such that $\lim_{t\rightarrow\infty}a_t=b$ and $a_t>b$. 
    
    By this sequence of parameter $a$ we construct a sequence of mixed information structures $\theta_t$. We will find that $\lim_{t\rightarrow\infty}R(\theta_t)=1/4$ since the Bayesian posterior of $\theta_t$ is $f(x_1)=f(x_1)=f(x_3)=f(x_4)=1/2$. Thus $R(f,\Theta,\Sigma)\ge 1/4$ for any $f$.

    For the adversarial setting, the random guess will still obtain regret $1/4$. the regret of other aggregators will not decrease. Thus we complete our proof for both adversarial and non-adversarial setting.
\end{proof}

\begin{proof}[Proof of \Cref{lem:knownmu}]
    Similarly, we first consider the non-adversarial setting. On the one hand, 
    \begin{align*}
        R(f^*,\theta,\sigma)&\ge\Pr_\theta[\omega=1]\ell(\mu)+\Pr_\theta[\omega=0]\ell(1-\mu)\\
        &=\mu(1-\mu)
    \end{align*}

    Again, we select $x_1=0,x_2=1,x_3=n-1,x_4=n$. Consider the mixture of two information structures. In the first one, $supp(v_1)=\{x_1,x_4\}$,$supp(v_0)=\{x_2,x_3\}$. In the second one, $supp(v_1)=\{x_2,x_3\}$,
    $supp(v_0)=\{x_1,x_4\}$.  Using the same argument in \Cref{lem:unknown} we will obtain that for any $f$, $R(f,\theta,\sigma)\ge \mu(1-\mu)$. Thus $f^*$ is the optimal aggregator.
\end{proof}

\subsection{Proof of \Cref{thm:alg}}

We prove this theorem in several steps. First, similar to the flow in \citet{arieli2018robust}, we do the basic dimension reduction on $\Theta$, which allows us to consider information structures with at most 4 possible reports (\Cref{lem:dim1}).

\begin{lemma}\label{lem:dim1}
Let $\Theta_4=\{\theta\in\Theta |supp(u_1)\le 2,supp(u_0)\le 2\}$. Then for any $f$, we have $R(f,\Theta,\Sigma)=R(f,\Theta_4,\Sigma)$.
\end{lemma}

\begin{proof}
Consider the distribution vector $u_1$, it should satisfy the following two linear constraints:
$$\sum_x u_1(x)=1$$
$$\sum_x xu_1(x)=na$$

By the basic theorem of linear programming, we can decompose $u_1$ with a linear combination of some extreme distribution vectors. That is, there exists $t_1,\cdots,t_m$ and $\lambda_1,\cdots,\lambda_m$ such that 

$$u_1(x)=\sum_i \lambda_i t_i(x)$$
$$\sum_x t_i(x)=1$$
$$\sum_x xt_i(x)=na$$

The extreme point here means that $supp(t_i)\le 2$ for any $i$. Now fix $u_0$, we can create a new information structure $\theta_i$ by setting the $t_i(x)=\Pr_{\theta_i}[X=x|\omega=1]$ and $u_0(x)=\Pr_{\theta_i}[X=x|\omega=0]$.  

Since the benchmark aggregator $\frac{u_1(x)u_0(x)}{u_1(x)+u_0(x)}$ is concave with $u_1(x)$ given $u_0(x)$, and $\E_\theta[\ell(f)]$ is linear in $u_1$ given $f$. Thus for any $f$,

$$R(f,\theta)\le \sum_{i} \lambda_i R(f,\theta_i)\le\max_i R(f,\theta_i).$$

Then following the same argument in $u_0$ we obtain that we can decompose $u_0$ with some extreme vectors with support space less than 2, which completes our proof for $R(f,\Theta)=R(f,\Theta_4)$.

\end{proof}

However, we still have around $\binom{n}{4}$ possible information structures. Furthermore, we show that we can only consider those ``extreme information structures'' with extreme report support space (\Cref{lem:dim2}). ``Extreme'' here means reports in $\{0,1,n-1,n\}$. Thus the meaningful information structures are reduced to constant size.
\begin{lemma}\label{lem:dim2}
    Let $\Theta_e=\{\theta\in\Theta_4| supp(u_1)\subset\{0,1,n-1,n\},supp(u_0)\subset\{0,1,n-1,n\}\}$. We have $R(\Theta,\Sigma)=R(\Theta_e,\Sigma)$.
\end{lemma}

\begin{proof}
    Consider the optimal aggregator $f^*=\arg\min_f R(f,\Theta_e)$. We extend $f^*$ by linear interpolation in the non-extreme input:
    \begin{equation}
    f(x)=\left\{\begin{aligned}
        &f^*(x)&\quad x=\{0,1,n-1,n\}\\
        &\frac{f^*(n-1)-f^*(1)}{n-2}*(x-1)&\quad otherwise\\
    \end{aligned}
    \right 
    .
\end{equation}

Then for any $2\le x\le n-2$, we can write $x$ as linear combination of $1$ and $n-1$, $x=\alpha(x)*1+(1-\alpha(x))*(n-1)$ where $\alpha(x)=\frac{n-1-x}{n-2}\in[0,1]$. Since both $(1-f(x))^2$ and $f(x)^2$ is convex in $[1,n-1]$,
\begin{align*}
    &\E_\theta[\ell(f)]\\
    &=\mu\sum_{x} u_1(x)(1-f(x))^2+(1-\mu)\sum_{x} u_0(x)f(x)^2\\
    &\le \mu\sum_{x\in\{0,1,n-1,n\}} u_1(x)(1-f(x))^2\\
    &+ \mu\sum_{x\notin\{0,1,n-1,n\}} u_1(x)\left(\alpha(x)(1-f(1))^2+(1-\alpha(x))(1-f(n-1))^2\right)\\
    &+(1-\mu)\sum_{x\in\{0,1,n-1,n\}} u_0(x)f(x)^2\\
    &+ (1-\mu)\sum_{x\notin\{0,1,n-1,n\}} u_0(x)\left(\alpha(x)f(1)^2+(1-\alpha(x))f(n-1)^2\right)\tag{Convexity}\\
    &=\mu\sum_{x\in\{0,1,n-1,n\}} v'_1(x)(1-f(x))^2\\
    &+(1-\mu)\sum_{x\in\{0,1,n-1,n\}} v'_0(x)f(x)^2 \\
\end{align*}

where $u_1'(0)=u_1(0),u_1'(n)=u_1(n)$ and 
$$u_1'(1)=u_1(1)+\sum_{2\le x\le n-2} \alpha(x)u_1(x)$$
$$u_1'(n-1)=u_1(1)+\sum_{2\le x\le n-2} (1-\alpha(x))u_1(x)$$

Thus we have 
\begin{align*}
    &\sum_x u_1'(x)\\
    &=u_1'(0)+u_1'(1)+u_1'(n-1)+u_1'(n)\\
    &=u_1(0)+u_1(1)+u_1(n-1)+u_1(n)\\
    &+\sum_{2\le x\le n-2} \alpha(x)u_1(x)+(1-\alpha)u_1(x)\\
    &=u_1(0)+u_1(1)+u_1(n-1)+u_1(n)+\sum_{2\le x\le n-2} u_1(x)\\
    &=1
\end{align*}

and 

\begin{align*}
    &\sum_x xu_1'(x)\\
    &=u_1'(1)+(n-1)u_1'(n-1)+nu_1'(n)\\
    &=u_1(1)+(n-1)u_1(n-1)+nu_1(n)\\
    &+\sum_{2\le x\le n-2} \alpha(x)u_1(x)+(1-\alpha)(n-1)u_1(x)\\
    &=u_1(0)+u_1(1)+(n-1)u_1(n-1)+nu_1(n)+\sum_{2\le x\le n-2} xu_1(x)\\
    &=a
\end{align*}

Similarly we have $\sum_x u_0'(x)=1,\sum_x xu_0'(x)=b$. We can create a new information structure $\theta'$ by setting $u_1'(x)=\Pr_{\theta_i}[X=x|\omega=1]$ and $u_0'(x)=\Pr_{\theta_i}[X=x|\omega=0]$. Notice that $\theta'\in\Theta_e$. Thus for any $\theta\in\Theta$, we have $R(f,\theta)\le R(f,\theta')$. So

$$R(\Theta)=\min_{f'}R(f',\Theta)\le R(f,\Theta)\le R(f,\Theta_e)=R(f^*,\Theta_e)=R(\Theta_e)$$.

However, since $\Theta_e\subset\Theta$, $R(\Theta_e)\le R(\Theta)$. So we obtain that $R(\Theta)=R(\Theta_e)$.
\end{proof}

Combining these two lemmas we only need to consider information structures with at most 4 possible reports and the reports are in $\{0,1,n-1,n\}$. In addition, $supp(u_1)\le 2, supp(u_0)\le 2$. There are at most 16 possible information structures and there exists an FPTAS for solving the finite number of information structures by \citet{guo2024algorithmic}. 

\begin{lemma}[\cite{guo2024algorithmic}]
    Suppose $|\Theta|=n$, There exists an FPTAS which can find the $\epsilon$-optimal aggregator in $O(n/\epsilon)$.
\end{lemma}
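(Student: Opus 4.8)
The plan is to read the statement as a convex–concave saddle-point problem over a \emph{finite} index set and solve it by first-order game dynamics, so I would first record why convexity holds. An aggregator is a finite-dimensional vector $f\in[0,1]^d$ (one value per possible report), and for each fixed information structure $\theta$ the regret $R(f,\theta)=\E_\theta[\ell(f(x),\omega)]-\E_\theta[\ell(opt_\theta(x),\omega)]$ is convex in $f$: under $\ell_2$ each coordinate contributes a convex quadratic $\mu u_1(x)(1-f(x))^2+(1-\mu)u_0(x)f(x)^2$, under $\ell_1$ each contributes a piecewise-linear convex term, and in both cases the benchmark term $\E_\theta[\ell(opt_\theta,\omega)]$ is constant in $f$. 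Hence $R(f,\Theta)=\max_{i\le n}R(f,\theta_i)$ is a pointwise maximum of $n$ convex functions, so $\min_f R(f,\Theta)$ is a convex program over the box $[0,1]^d$.

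First I would dualize the inner maximum. Introducing mixing weights $p\in\Delta_n$ over the $n$ information structures,
\[
\min_{f\in[0,1]^d}R(f,\Theta)=\min_{f}\max_{p\in\Delta_n}\sum_{i=1}^n p_i\,R(f,\theta_i),
\]
and since the objective is convex in $f$, linear in $p$, and both domains are convex and compact, Sion's minimax theorem lets me swap the order and study $\max_{p\in\Delta_n} g(p)$ with $g(p)=\min_{f}\sum_i p_i R(f,\theta_i)$. The map $g$ is concave (a minimum of functions linear in $p$), and — this is the step that buys efficiency — for fixed $p$ the inner minimization is \emph{separable} across the coordinates of $f$, so the best response $f^*(p)$ and the value $g(p)$ have closed forms computable in $O(n)$ time, which simultaneously produces a supergradient of $g$ at $p$.

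Then I would run a first-order ascent on $g$ over the simplex (projected supergradient ascent, entropic mirror ascent, or Frank--Wolfe), track the running-average weight $\bar p$, and output $f^*(\bar p)$; the duality gap of the averaged iterate is controlled by the standard regret bound and certifies $\epsilon$-optimality. The main obstacle is the \emph{rate}: generic no-regret dynamics close the gap only at $O(1/\sqrt T)$, giving $O(n/\epsilon^2)$, whereas the claim demands $O(n/\epsilon)$. To get the faster rate I would exploit the strong convexity of the $\ell_2$ inner problem, which makes $g$ smooth (its supergradient is Lipschitz by an envelope argument), so a smooth first-order method on the concave $g$ converges at the $O(1/T)$ rate and needs only $T=O(1/\epsilon)$ iterations at $O(n)$ cost each. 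The delicate case is $\ell_1$, where $g$ is merely piecewise smooth: there I would either smooth the loss by $O(\epsilon)$ and absorb a controlled bias, or use an optimistic/extragradient scheme exploiting the bilinear coupling in $p$, each recovering the $O(1/T)$ iteration count and hence the claimed $O(n/\epsilon)$ total running time.
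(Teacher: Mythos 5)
You should know first that the paper contains no proof of this lemma at all: it is imported as a black box from \citet{guo2024algorithmic}, so there is nothing in-paper to compare against line by line. Your reconstruction is the right genre --- a finite zero-sum game between the aggregator and nature over the $n$ structures, solved by first-order saddle-point dynamics with $O(1/T)$ rate and $O(n)$ per-iteration cost --- and your convexity bookkeeping (regret convex in $f$, benchmark constant in $f$, payoff linear in the mixing weights $p$, Sion applies) is correct. Two remarks on scope: the per-iteration cost of your best response is really $O(nd)$ where $d$ is the size of the report space, so the stated $O(n/\epsilon)$ presumes $d=O(1)$ --- which is exactly how the lemma is deployed in this paper, since \Cref{lem:dim2} first shrinks the support to $\{0,1,n-1,n\}$; and your worry about the L1 case is misplaced, because on $[0,1]$ the absolute loss is \emph{linear} ($|f(x)-1|=1-f(x)$ and $|f(x)-0|=f(x)$), so the saddle problem there is bilinear and needs no smoothing or bias-absorption detour.

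There are, however, two genuine gaps in the rate argument as you state it. First, the claim that strong convexity of the inner problem makes $g$ smooth is not uniformly valid: the strong-convexity modulus in coordinate $x$ is $\sum_i p_i\bigl(\mu_i u_1^i(x)+(1-\mu_i)u_0^i(x)\bigr)$, which vanishes whenever the mixture $p$ puts no mass on report $x$, so the envelope argument does not yield a Lipschitz gradient of $g$ near such faces of the simplex and the $O(1/T)$ rate through ``smooth concave $g$'' is unjustified as written. The standard repair is to stay primal-dual: under L2 the coupling $\phi(f,p)=\sum_i p_i R(f,\theta_i)$ is quadratic in $f$ and bilinear in $p$, hence has a Lipschitz gradient on $[0,1]^d\times\Delta_n$, and extragradient/Mirror-Prox closes the duality gap at $O(1/T)$ with no strong convexity needed; the bilinear L1 case is covered by the same machinery. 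Second, and more seriously, your output $f^*(\bar p)$ is not certified by the regret bound: averaging certifies $g(\bar p)\ge v-\epsilon$, but best-responding to $\bar p$ only controls the \emph{mixed} objective $\sum_i \bar p_i R(f^*(\bar p),\theta_i)$, not the worst case $\max_i R(f^*(\bar p),\theta_i)$, and without strong convexity (the L1/bilinear case) the best response to an $\epsilon$-optimal mixture can have worst-case regret far above the game value. The certified output is the averaged \emph{primal} iterate $\bar f$, whose gap the same telescoping bound controls; $f^*(\bar p)$ is admissible only when strong convexity lets you transfer dual near-optimality to the primal, and even then the transfer costs a $\sqrt{\epsilon}$ factor unless tracked carefully. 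With those two substitutions your plan does deliver the claimed $O(n/\epsilon)$ FPTAS for constant report-space size.
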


If we apply the FPTAS in $\Theta_e$ we will find the $\epsilon$-optimal aggregator in $O(1/\epsilon)$, which completes our proof.

\subsection{Proof of \Cref{lem:reg}}
Suppose $R_n$ is the regret when the number of experts is $n$. We want to estimate the difference $R_n-R_{n+1}$. Let $$f_1=\arg\min_f\max_{\theta\in\Theta_{n+1}} R(f,\theta).$$ As we prove in \Cref{lem:dim2}, we only need to consider point $0,1,n-1,n$. Then for any information structure $\theta_1$ in $\Theta_{n+1}$, we map it to another information structure $\theta_2\in\Theta_n$ by the following rules. If $\theta_1$ contains report $0,1$, let $\theta_2$ contains report $0,1$; if $\theta_1$ contains report $n$ or $n+1$, let $\theta_2$ contains report $n$ or $n-1$. Then the joint distribution is determined by the report space. Now we construct an aggregator $f_2$: 
    \begin{equation}
    f_2(x)=\left\{\begin{aligned}
        &f_1(x)&\quad x=0,1\\
        &f_1(x-1)&\quad x=n,n-1\\
    \end{aligned}
    \right 
    .
\end{equation}

Then we have

\begin{align*}
    R(f_2,\theta_2)-R(f_1,\theta_1)&=\E_{x\sim\theta_2}[(f(x)-opt_{\theta_2}(x))^2]\\
    &-\E_{x\sim\theta_1}[(f(x)-opt_{\theta_1}(x))^2]\\
    &=\sum_x \Pr_{\theta_2}[x](f(x)-opt_{\theta_2}(x))^2\\
    &-\Pr_{\theta_1}[x](f(x)-opt_{\theta_1}(x))^2\\
    &\le \sum_x \left|\Pr_{\theta_2}[x]-\Pr_{\theta_1}[x]\right| \tag{$(f(x)-opt_{\theta_2}(x))^2\le 1$}\\
    &\le O\left(\frac{1}{n(n+1)}\right)
\end{align*}

Thus 
$R(\Theta_n)\le R(f_2,\Theta_{n+1})\le R(f_1,\Theta_n)+O\left(\frac{1}{n(n+1)}\right)=R(\Theta_{n+1})+O\left(\frac{1}{n(n+1)}\right)$.

Add up all $n$ we have $R(\Theta_{n})\le \sum_k O\left(\frac{1}{k(k+1)}\right)=c+O(\frac{1}{n})$

\subsection{Proof of \Cref{thm:optL2}}
On the one hand, it is easy to verify that $k$-truncated mean satisfies the condition in \Cref{lem:opt_agg}. Thus $R(f^*,\theta_b,\sigma_b)=R(f^*,\Theta,\Sigma)$

On the other hand, we have

\begin{equation}
    \Pr_{\theta_b,\sigma_b}[X=x|\omega=1]=\left\{\begin{aligned}
        &\frac{(n-k)a-k}{n-2k}&\quad x=n-k\\
        &\frac{(n-k)(1-a)}{n-2k}&\quad x=k\\
        &0&\quad otherwise\\
    \end{aligned}
    \right 
    .
\end{equation}

 and 
 
\begin{equation}
    \Pr_{\theta_b,\sigma_b}[X=x|\omega=0]=\left\{\begin{aligned}
        &\frac{(n-k)b}{n-2k}&\quad x=n-2k\\
        &\frac{n-2k-(n-k)b}{n-2k}&\quad x=0\\
        &0&\quad otherwise\\
    \end{aligned}
    \right 
    .
\end{equation}

By simple calculate we find that $opt_{\theta_b,\sigma_b}(x)=\Pr_{\theta_b,\sigma_b}[\omega=1|x]=f^*$.

Thus $R(f,\Theta,\Sigma)\ge R(f,\theta_b,\sigma_b)\ge R(f^*,\theta_b,\sigma_b)$.

Combine these two claims we complete our proof.

\section{Hard Aggregators}
\label{apx:hard}
In this section, we discuss a special family of aggregators-the hard aggregators that can randomly output a decision in $\{0,1\}$. In this case, L1 loss and L2 loss are equivalent. We prove that the $k$-ignorance random dictator is optimal. It echos the results in \cite{arieli2023universally} that the random dictator is optimal for the non-adversarial setting.

\begin{definition}[$k$-ignorance random dictator]
    We call $f$ is $k$-ignorance random dictator if 
    \begin{equation}
    f(x)=\left\{\begin{aligned}
        &1&\quad x\ge n-k\\
        &0&\quad x\le k\\
        &report\ 1\ with\ probability\ \frac{x-k}{n-2k}&\quad otherwise\\
    \end{aligned}
    \right 
    .
\end{equation}
\end{definition}

\begin{theorem}\label{thm:dictator}
When $\gamma\le \min\left(\frac{a\mu-(1-\mu)b}{\mu+a\mu-(1-\mu)b},\frac{a}{1+a},\frac{1-b}{2-b}\right)$, the $k$-ignorance random dictator is optimal for both L1 and L2 loss. Moreover, the regret
$$R(\Theta,\Sigma)=\mu+\frac{(1-\mu)((1-\gamma)b+\gamma)-\mu(1-\gamma)a}{1-2\gamma}.$$
\end{theorem}

\begin{proof}
    Notice that for both L1 and L2 loss
    \begin{equation}
    \ell(y,\omega)=\left\{\begin{aligned}
        &1&\quad y=\omega\\
        &0&\quad y\ne\omega\\
    \end{aligned}
    \right 
    .
    \end{equation}
    As the soft aggregators include hard aggregators, we only need to prove that the $k$-ignorance random dictator $f_1$ has the same regret as the $k$-truncated mean $f_2$ under L1 loss. In fact, 
    \begin{align*}
        R(f_1,\Theta,\Sigma)&=\sup_{\theta\in\theta,\sigma\in \Sigma}\E_{f_1,\theta,\sigma}[\ell_1(f(x),\omega)]-\E_\theta[\ell_1(opt_\theta(x_T),\omega)]\\
        =& \sup_{\theta\in\theta,\sigma\in \Sigma}\E_{\theta,\sigma}(\Pr[f_1(x)=1]\ell_1(1,\omega)\\
        &+\Pr[f_1(x)=0]\ell_1(0,\omega))-\E_\theta[\ell_1(opt_\theta(x_T),\omega)]\\
        =&\sup_{\theta\in\theta,\sigma\in \Sigma}\E_{\theta,\sigma}(f_2(x)\ell_1(1,\omega)+(1-f_2(x))\ell_1(0,\omega))\\
        &-\E_\theta[\ell_1(opt_\theta(x_T),\omega)]\\
        =&\sup_{\theta\in\theta,\sigma\in \Sigma}\E_{\theta,\sigma}\ell_1(f_2(x),\omega)+(1-f_2(x))\ell_1(0,\omega))\\
        &-\E_\theta[\ell_1(opt_\theta(x_T),\omega)]\\
        =&R(f_2,\Theta,\Sigma)
    \end{align*}

    Thus the $k$-ignorance random dictator is optimal.
\end{proof}

\section{Proof of \Cref{thm:multi}}
\label{apx:multi}
Again we define a bad information structure:

\begin{definition}[A Bad Information Structure]
If $n-(m-1)k>(n-k)q_\omega>k$, we define a bad information structure $\theta_b,\sigma_b$ such that

$$\Pr_{\theta_b}[\vx(\omega)=n-(m-1)k|\omega]=\frac{(n-k)q_\omega-k}{n-mk}$$

$$\Pr_{\theta_b}[\vx(\omega)=k|\omega]=\frac{n-(m-1)k-(n-k)q_\omega}{n-mk}$$

For other $\omega'\ne\omega$, we ensure that 
$$\Pr_{\theta_b}[\vx(\omega')+\sigma(\omega')=k|\omega]=1$$

when $x(\omega)=n-(m-1)k$.

When $x(\omega)=k$, there exists $\omega'\ne\omega$ such that 

$$\Pr_{\theta_b}[\vx(\omega')+\sigma(\omega')=n-(m-1)k|\omega]=1$$ 

and for any other $\omega'$,

$$\Pr_{\theta_b}[\vx(\omega')+\sigma(\omega')=k|\omega]=1$$ 
    
\end{definition}

First for any information structure that satisfying the correct probability $q_\omega$, the k-ignorance random dictator will obtain loss

\begin{align*}
    R(f,\Theta,\Sigma)&=1-\E_{\omega}[f(\vx)|\omega]\\
    &\le 1-\E_{\omega}\left[\frac{\vx(\omega)-k}{n-mk}\bigg|\omega\right]\\
    &\le 1-\E_{\omega}\left[\frac{(n-k)q_{\omega}-k}{n-mk}\right]\\
    &=\frac{1-(m-1)\gamma-(1-\gamma)(\sum_\omega \mu(\omega)q_\omega)}{1-m\gamma}.
\end{align*}

In particular, for the bad information structure $(\theta_b,\sigma_b)$, 

$$R(f,\theta_b,\sigma_b)=\frac{1-(m-1)\gamma-(1-\gamma)(\sum_\omega \mu(\omega)q_\omega)}{1-m\gamma}.$$

On the other hand, for any aggregator $f'$, it will at least suffer the loss $$R(f',\Theta,\Sigma)\ge R(f',\theta_b,\sigma_b)\ge\frac{1-(m-1)\gamma-(1-\gamma)(\sum_\omega \mu(\omega)q_\omega)}{1-m\gamma}.$$
The reason is for the bad information structure $(\theta_b,\sigma_b)$, its best response is the same as the k-ignorance random dictator. Thus we complete our proof.
\section{Numerical Experiment}
\label{sec:exp1}
\begin{figure}[htbp]
    \centering
    \begin{subfigure}[b]{0.45\columnwidth}
    \includegraphics[width=\textwidth]{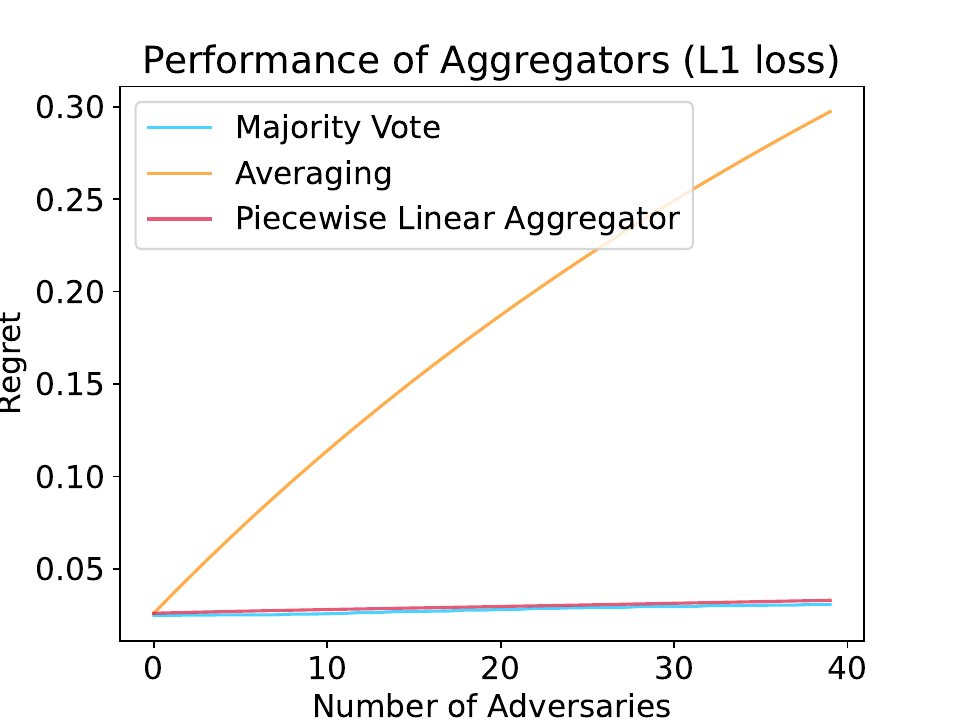}
    \caption{\textbf{Extreme Strategy}}
    \end{subfigure}
    \begin{subfigure}[b]{0.45\columnwidth}
    \includegraphics[width=\textwidth]{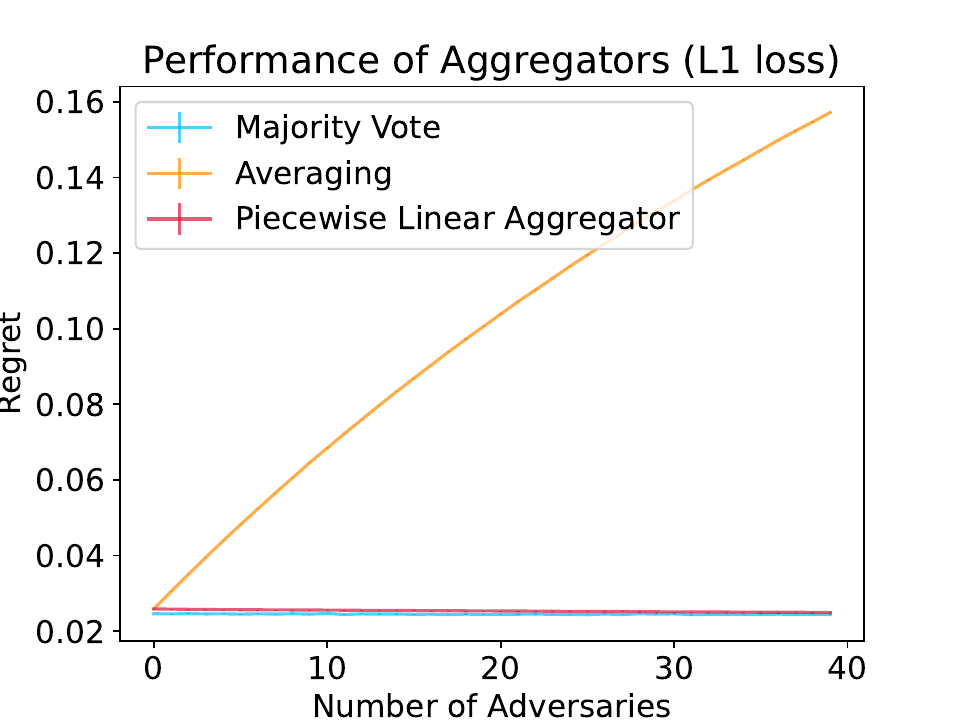}
    \caption{\textbf{Random Strategy}}
    \end{subfigure}
    \caption{The performance of different aggregators under different adversarial strategies. The x-axis is the number of adversaries we added to experts. The number of truthful experts is 100. The y-axis is the regret.}
    \label{fig:learning111}
\end{figure}

\Cref{fig:learning111} shows the results under L1 loss. The optimal aggregator and majority vote have close performance. It also reflects our discussion that L1 loss will encourage the aggregators to output a decision, which is beneficial for the majority vote.

\section{General Model}\label{sec:extend}
In this section, we extend our adversarial information aggregation problem into a general case. In the general setting, we can capture a non-binary state space $\Omega$, other families of information structures $\Theta$, and other kinds of reports $x_i$ such as the posterior forecast $x_i=\Pr_{\theta}[\omega|s_i]$.
\subsection{Problem Statement}
Suppose the world has a state $\omega\in\Omega$, $|\Omega|=m$. There are $n$ experts and each expert $i$ receives a private signal $s_i$ in a signal space $\mathcal{S}_i$. Let $\mathcal{S}=\mathcal{S}_1\times \mathcal{S}_2\times...\times \mathcal{S}_n$. They are asked to give a report from a feasible choice set $X$ according to their private signals. We denote the joint distribution, or information structure, over the state and signals by $\theta\in \Delta_{\Omega\times \mathcal{S}}$. For simplicity, we denote $\vx_T=(x_t)_{t\in T}$ as the sub-vector for any vector $\vx$ and index set $T$.

We assume the experts are either truthful or adversarial. Let $T$ denote the set of truthful experts who will give their best report truthfully. $A$ is the set of adversarial experts who will collude and follow a randomized strategy $\sigma_A: \mathcal{S}_A\to \Delta_{X^k}$ depending on their private signals, where $k=\gamma n$ is the number of adversarial experts. The family of all possible strategies is $\Sigma$. We assume $\gamma<1/2$ such that there exist at least half truthful experts, otherwise no aggregators can be effective.

Notice that the ability of adversarial experts can be modeled by their private signals. For example, the omniscient adversarial experts can know truthful experts' signals $\mathcal{S}_R$ and the world's true state $\omega$. The ignorant adversarial experts are non-informative, i.e. receiving nothing. 

The aggregator is an anonymous function $f(\cdot)\in \mathcal{F}$ which maps $\vx\in X^n$ to a distribution $\vy\in\Delta_\Omega$ over the world state. We define a loss function $\ell(\vy,\omega): \Delta_\Omega\times\Omega \to R^+$, indicating the loss suffered by the aggregator when the aggregator's predicted distribution of the state is $\vy$ and the true state is $\omega$. The expected loss is $\E_{\theta,\sigma}[\ell(f(\vx),\omega)]$. We assume $\ell$ is symmetric and convex for any state, which means we can abbreviate $\ell(\vy,\omega)$ by $\ell(y_\omega)$. Without loss of generality, we assume $\ell(0)=1,\ell(1)=0$ and $\ell(\cdot)$ is decreasing. In particular, we consider the L1 loss (or absolute loss) $\ell_1(y)=1-y$ and the L2 loss (or square loss), $\ell_2(y)=(1-y)^2$. We define a benchmark function, that gives the optimal result given the joint distribution and truthful experts' reports:
\[opt_\theta(\vx_T)=\argmin_{f'\in\mathcal{F}}\E_\theta[\ell(f'(\vx_T)_\omega)]\]
to minimize the expected loss. 

Under the L1 loss, $opt_\theta(\vx_T)_\omega=\mathbbm{1}(\omega=\arg\max_\omega\Pr_{\theta}[\omega|\vx_T])$, which is the maximum likelihood. Under the L2 loss, $opt_\theta(\vx_T)_\omega=\Pr_{\theta}[\omega|\vx_T]$, which is the Bayesian posterior.

\paragraph{Regret Robust Paradigm} Given a family of joint distributions $\Theta$ and a family of strategies $\Sigma$, a set of aggregators $\mathcal{F}$, we aim to minimize the expected loss in the worst information structure. That is, we want to find an optimal function $f^*$ to solve the following min-max problem:
$$R(\Theta,\Sigma)=\inf_{f\in \mathcal{F}}\sup_{\theta\in\theta,\sigma\in \Sigma}\E_{\theta,\sigma}[\ell(f(\vx)_\omega)]-\E_\theta[\ell(opt_\theta(\vx_T)_\omega)].$$

Again we define $R(f,\theta,\sigma)=\E_{\theta,\sigma}[\ell(f(\vx)_\omega)]-\E_\theta[\ell(opt_\theta(\vx_T)_\omega)]$ and $R(f,\Theta, \Sigma)=\sup_{\theta\in\Theta,\sigma\in\Sigma} R(f,\theta,\sigma)$ for short.

\subsection{Negative Results For General Model}
\label{apx:general}
We first provide an auxiliary lemma to characterize the behavior of adversaries.
\begin{lemma}
    Assume $\Sigma_p=\{\sigma|\sigma\in \Sigma \ 
 and\ \sigma\ is\ a\ pure\ strategy\}$. Then $R(f,\Theta,\Sigma)=R(f,\Theta,\Sigma_p)$ for any $f$ and $\Theta$.
\end{lemma}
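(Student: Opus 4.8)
The plan is to prove the two inequalities $R(f,\Theta,\Sigma_p)\le R(f,\Theta,\Sigma)$ and $R(f,\Theta,\Sigma)\le R(f,\Theta,\Sigma_p)$ separately. The first is immediate: since $\Sigma$ is the family of \emph{all} strategies it contains every pure strategy, so $\Sigma_p\subseteq\Sigma$ and the supremum over the smaller set can only be no larger. The entire content is therefore the reverse inequality, which I would establish by a standard extreme-point (linearity) argument: fixing $f$ and $\theta$, the regret is an affine functional of the randomized strategy $\sigma$, so its supremum over the convex set of mixed strategies is already attained on the vertices, i.e.\ on pure strategies.

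Concretely, I would fix $f$, $\theta$, and a mixed strategy $\sigma=\sigma_A:\mathcal{S}_A\to\Delta_{X^k}$ and exhibit a pure $\sigma'\in\Sigma_p$ with $R(f,\theta,\sigma')\ge R(f,\theta,\sigma)$. The key observation is that the benchmark term $\E_\theta[\ell(opt_\theta(\vx_T)_\omega)]$ does not depend on $\sigma$ at all --- it is a function of $\theta$ and the truthful reports only --- so only the aggregator's expected loss $\E_{\theta,\sigma}[\ell(f(\vx)_\omega)]$ is affected. Writing $\vx=(\vx_T,\vx_A)$ and grouping by the adversarial signal profile $s_A$, this term becomes
\[
\E_{\theta,\sigma}[\ell(f(\vx)_\omega)]=\sum_{s_A}\Pr_\theta[s_A]\sum_{\vx_A}\sigma_A(s_A)(\vx_A)\,c(s_A,\vx_A),
\qquad
c(s_A,\vx_A):=\E_{(\omega,\vx_T)\sim\theta(\cdot\mid s_A)}\!\left[\ell(f(\vx_T,\vx_A)_\omega)\right]
\]
which is manifestly linear in the probabilities $\{\sigma_A(s_A)(\vx_A)\}$ and, moreover, separable across signal profiles $s_A$.

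Because the objective decouples over $s_A$, I can optimize each conditional distribution $\sigma_A(s_A)\in\Delta_{X^k}$ independently, and a linear function on a simplex is maximized at a vertex. Choosing $\vx_A^\star(s_A)\in\argmax_{\vx_A}c(s_A,\vx_A)$ for every $s_A$ and letting $\sigma'_A(s_A)$ be the point mass on $\vx_A^\star(s_A)$ yields a pure strategy $\sigma'\in\Sigma_p$ with $\E_{\theta,\sigma'}[\ell(f(\vx)_\omega)]\ge\E_{\theta,\sigma}[\ell(f(\vx)_\omega)]$, hence $R(f,\theta,\sigma')\ge R(f,\theta,\sigma)$. Taking the supremum over $\theta$ and $\sigma$ then gives $R(f,\Theta,\Sigma)\le R(f,\Theta,\Sigma_p)$, and combined with the trivial inclusion this completes the argument.

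The main thing to be careful about is the attainment of the per-profile maximum $\argmax_{\vx_A}c(s_A,\vx_A)$: when the report space $X$ (and hence $X^k$) is finite this is automatic, but for an infinite or continuous choice set I would instead fix $\epsilon>0$, pick for each $s_A$ a report whose coefficient is within $\epsilon$ of $\sup_{\vx_A}c(s_A,\vx_A)$, and obtain a pure $\sigma'$ with $R(f,\theta,\sigma')\ge R(f,\theta,\sigma)-\epsilon$; letting $\epsilon\to0$ recovers the equality of suprema. The only other point worth checking is that the coordinatewise optimization over distinct $s_A$ is legitimate, which holds precisely because the adversaries' choices at different signal profiles are independent coordinates of $\sigma$ and the objective is a nonnegative-weighted sum of the per-profile terms.
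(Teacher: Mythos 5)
Your proposal is correct and follows essentially the same route as the paper's proof: both reduce to showing the adversary's expected loss is linear in the randomized strategy and replace $\sigma$ by the pure strategy that, for each signal profile $\vs_A$, selects a report maximizing the conditional expected loss. If anything, your write-up is slightly more careful than the paper's (explicit conditioning on $s_A$, separability across profiles, and the $\epsilon$-argument for non-attained suprema), but the underlying idea is identical.
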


\begin{proof}
    On the one hand, $\Sigma_p\in \Sigma$, so $R(f,\theta,\Sigma)\ge R(f,\theta,\Sigma_p)$.
    
    On the other hand, for any $f,\theta,\sigma$, 
    \begin{align*}
        R(f,\theta, \sigma)&=\E_{\theta}\E_{\sigma}[\ell(f(\vx_T,\sigma(\vs_A)),\omega)]-\E_\theta[\ell(opt_\theta(\vx_T)_\omega)]\\
        &\le \E_{\theta}[\max_{\vx_A'}\ell(f(\vx_T,\vx_A'(\vs_A)),\omega)]-\E_\theta[\ell(opt_\theta(\vx_T)_\omega)]\\
    \end{align*}

    Fix the aggregator $f$ and distribution $\theta$, we can let $\sigma'(\vs_A)=\arg\max_{\vx_A'}\E_{\theta}[\ell(f(\vx_T,\vx_A'(\vs_A)),\omega)]$ (We arbitrarily select one $\vx_A'$ when there is multiple choices). Since $\sigma'$ is a pure strategy, $\sigma'\in \Sigma^d$. For any $\sigma\in \Sigma$, $R(f,\theta, \sigma')\ge R(f,\theta,\sigma)$. Thus $R(f,\theta,\Sigma)\le R(f,\theta,\Sigma_p)$, which completes our proof.
\end{proof}

\subsection{Extension to Multi-State Case}
For the multi-state case: $|\Omega|>2$, we can also define the sensitive parameter by enumerating each state:

\begin{definition}[sensitive parameter, multiple]
    When $|\Omega|>2$, for any benchmark function $opt$, the sensitive parameter is defined by
    $$S(opt,k)=\max_{\theta,\theta'\in \Theta, d(\vx_T,\vx_T')\le k,\omega\in \Omega} |opt_\theta(\vx_T)_\omega-opt_{\theta'}(\vx_T')_\omega|.$$
\end{definition}

First, we prove that for the general model, aggregators are vulnerable to adversaries. A direct observation is that fixing the number of experts $n$ while increasing the number of adversaries $k$ will not decrease the regret $R(\Theta,\Sigma)$. Since the new adversaries can always pretend to be truthful experts. In most cases, when $\gamma\approx 1/2$, it is impossible to design any informative aggregator since an adversarial expert can report opposite views to another truthful expert.

One natural question is how many adversaries we need to attack the aggregator. For this question, we provide a negative result. The following lemma shows that for a large family of information structures, a few adversaries are enough to fool the aggregator. Surprisingly, the number of adversaries is independent of the number of truthful experts but grows linearly with the number of states. In special, in the binary state setting, one adversary is enough to completely fool the aggregator.

\begin{lemma}
Assume the truthful experts are asked to report the posterior $x_i(\omega)=\Pr_\theta[\omega|s_i]$. We define the fully informative expert who always knows the true state and the non-informative expert who only knows the prior. If $\Theta$ includes all information structures consisting of these two types of experts, then for $k\ge m-1$, the optimal aggregator is the uniform prediction. That is, $f^*=(\frac{1}{m},\cdots, \frac{1}{m})$ and $R(\theta,\sigma)=\ell(\frac{1}{m})$.
\end{lemma}

\begin{proof}
We select one fully informative expert, which means she will report a unit vector $\bm{e}_i=(0,\cdots,1,0,\cdots,0)$ with $e_i=1$ and $e_j=0$ for any $j\ne i$. Other experts are non-informative and will always report the uniform prior $(\frac{1}{m},\cdots, \frac{1}{m})$. Then let the $m-1$ adversaries report other unit vectors $\bm{e}_j$ for $i\ne j$. Other adversaries also report the uniform prior. In that case, the aggregator will always see the same reports $\vx^0$, and the benchmark can follow the informative expert and suffer zero loss. Since $\ell$ is convex and $\sum_{\omega\in\Omega} f(\vx^0)_\omega=1$, then for any $f$, $R(f,\theta,\sigma)=\sum_{\omega\in\Omega} \ell(f(\vx^0)_\omega)\ge\ell(\frac{1}{m})$. Thus $R(\Theta, \Sigma)\ge \ell(\frac{1}{m})$.

On the other hand, when $f^*=(\frac{1}{m},\cdots, \frac{1}{m})$, $$R(f^*,\theta, g)=\sum_{\omega\in\Omega}\Pr[\omega]\ell(\frac{1}{m})=\ell(\frac{1}{m})$$ for any $\theta,\sigma$. So the uniform prediction is the optimal aggregator and $R(\Theta, \Sigma)=\ell(\frac{1}{m})$.

\end{proof}

The uniform prediction means that we cannot obtain any additional information from reports. It is almost impossible in the non-adversarial setting. We provide a common setting as an example.
\begin{example}[Conditionally Independent Setting] Conditionally independent setting $\Thetaci$ means that every expert receives independent signals conditioning on the world state $\omega$. Formally, for each $\theta\in \Thetaci$, for all $s_i\in \mathcal{S}_i,\omega\in\Omega=\{0,1\}$, $\Pr_\theta[s_1,\cdots,s_n|\omega]=\Pi_{i}\Pr_\theta[s_i|\omega]$.
\end{example}

\begin{corollary}\label{cor:ci}
    In the conditionally independent setting, if we select $\ell$ as the L2 loss, then for any $n$ and $k\ge 1$, $R=1/4$.
\end{corollary}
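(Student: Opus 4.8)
The plan is to obtain this as a direct specialization of the preceding lemma to the binary state case, so the bulk of the work is checking that the restriction to the conditionally independent family does not destroy either bound. Since $|\Omega|=2$ we have $m=2$, and the lemma's threshold $k\ge m-1$ becomes exactly $k\ge 1$, which matches the hypothesis of the corollary. Under the L2 loss $\ell_2(y)=(1-y)^2$ the lemma's value $\ell(1/m)$ evaluates to $\ell_2(1/2)=1/4$, the claimed regret. Thus it remains only to argue (i) that $\Thetaci$ is rich enough to host the lemma's lower-bound construction, and (ii) that the uniform predictor still certifies the matching upper bound on $\Thetaci$.

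For the upper bound I would observe that the uniform prediction $f^*=(1/2,1/2)$ attains loss $\ell_2(1/2)=1/4$ against either state regardless of the reports, so $R(f^*,\theta,\sigma)=1/4$ for every $\theta,\sigma$; since $\Thetaci$ is a subfamily of the general $\Theta$, this gives $R(\Thetaci,\Sigma)\le 1/4$ immediately. For the lower bound I would reuse the attacking structure from the lemma: one fully informative expert reporting the unit vector $\bm{e}_\omega$, the remaining truthful experts non-informative, and a single adversary ($k\ge1=m-1$) reporting the opposing unit vector, so that the aggregator always sees the same report profile while the benchmark follows the informative expert at zero loss. Convexity of $\ell$ then forces regret at least $\ell(1/2)$ for any $f$, as in the lemma.

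The one step that genuinely needs care is verifying that this construction lies in $\Thetaci$, i.e. that the signal distribution factorizes as $\Pr_\theta[s_1,\dots,s_n\mid\omega]=\prod_i \Pr_\theta[s_i\mid\omega]$. I would check this directly: the fully informative expert's signal is a deterministic function of $\omega$, hence a point mass once $\omega$ is fixed, while each non-informative expert's signal is by construction independent of the state. Conditioning on $\omega$ therefore makes every coordinate either degenerate or state-independent, so the joint conditional trivially factorizes and $\theta\in\Thetaci$. Note also that the adversary's opposing report is part of the strategy $\sigma$, not of $\theta$, so it places no constraint on membership in $\Thetaci$. Combining the two bounds yields $R=1/4$ for every $n$ and every $k\ge1$. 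I expect the conditional-independence check to be the only real obstacle; everything else follows at once from the preceding lemma and the elementary evaluation $\ell_2(1/2)=1/4$.
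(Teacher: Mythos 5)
Your proposal is correct and takes essentially the same route as the paper: the corollary is meant to follow by specializing the preceding lemma to $m=2$ (so the threshold $k\ge m-1$ becomes $k\ge 1$ and $\ell_2(1/2)=1/4$), with the only substantive obligation being the verification you carry out — that the one-fully-informative-expert construction has conditionally independent signals (each coordinate is deterministic or state-independent given $\omega$), so the lemma's lower bound survives restriction to $\Thetaci$, while the uniform predictor gives the matching upper bound. One pedantic note: since regret subtracts the benchmark's loss, $R(f^*,\theta,\sigma)\le 1/4$ rather than $=1/4$ for every $\theta,\sigma$ (equality holds at the worst case, where the benchmark loss is zero), but this only affects the direction you need and the paper's own lemma proof glosses the same point.
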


Notice that in the non-adversarial setting, when $n\rightarrow\infty, R\rightarrow 1/4$ \citep{arieli2018robust}, which is a strict condition. However, in the adversarial setting, we only use one adversary to obtain the same bad regret.

\subsection{Estimate the Regret $R(\Theta,\Sigma)$} 
We have provided a negative result when there are enough adversarial experts. In this section, we want to extend our result regarding the regret to any number of adversarial experts setting. It will give us a further understanding of the effect of adversaries.

Intuitively, when there exist important experts, it is easier to disturb the aggregator because adversaries can always imitate an important expert but hold the opposite view. In other words, the regret will increase because the DM is non-informative while the benchmark can predict accurately. Thus we can use the importance of an expert to bound the regret. The remaining question is, how to quantify the importance of an expert? We find that the benchmark function is a proper choice. We state our main result as \Cref{thm:estimate}.

\begin{theorem}\label{thm:estimate}
If $\Theta$ is $\alpha$-regular as defined in \Cref{def:regular}, for L2 loss function $\ell_2$, there exists a function $S(opt,k)$ depending on the benchmark function $opt$ and number of adversaries $k$ such that 
$$S(opt,k)^2\ge R(\Theta,\Sigma)\ge \frac{\alpha}{4}S(opt,k)^2.$$
\end{theorem}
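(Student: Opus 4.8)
The plan is to prove the two inequalities separately: the upper bound $R(\Theta,\Sigma)\le S(opt,k)^2$ is unconditional and follows from an explicit aggregator, while the lower bound $R(\Theta,\Sigma)\ge \frac{\alpha}{4}S(opt,k)^2$ uses $\alpha$-regularity (\cref{def:regular}) together with a confusion argument. Two facts drive both directions. First, a combinatorial observation: two truthful configurations $\vx_T,\vx_T'$ of size $n-k$ admit a common observed report $\vx\in X^n$ — i.e. each can be completed to $\vx$ by $k$ adversarial reports — if and only if they differ in at most $k$ reports, which is exactly the distance bounded in the definition of $S(opt,k)$. Second, the quadratic structure of $\ell_2$: conditioned on an observed report, the excess loss of a prediction $f(\vx)$ over the Bayesian benchmark is the squared deviation $(f(\vx)_\omega-opt_\theta(\vx_T)_\omega)^2$ (exactly in the binary case; in the multi-state case one uses $2$-smoothness of $\ell_2$ to bound it). I would record both as short lemmas before the main argument.

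For the upper bound I construct a single aggregator. For each observed $\vx$, fix an arbitrary \emph{consistent} pair $(\theta_{\vx},\vx_T^{\vx})$, meaning $\vx_T^{\vx}$ is a size-$(n-k)$ sub-multiset of $\vx$ realizable under $\theta_{\vx}\in\Theta$, and set $f(\vx):=opt_{\theta_{\vx}}(\vx_T^{\vx})$. Now take any true scenario $(\theta,\sigma)$ that produces observation $\vx$ from truthful reports $\vx_T$. Both $(\theta,\vx_T)$ and $(\theta_{\vx},\vx_T^{\vx})$ are consistent with the same $\vx$, so by the combinatorial fact $d(\vx_T,\vx_T^{\vx})\le k$, whence $|f(\vx)_\omega-opt_\theta(\vx_T)_\omega|\le S(opt,k)$ for every $\omega$. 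Applying the quadratic identity report-by-report gives $R(f,\theta,\sigma)=\E_{\theta,\sigma}[(f(\vx)_\omega-opt_\theta(\vx_T)_\omega)^2]\le S(opt,k)^2$, and taking the supremum over $(\theta,\sigma)$ yields $R(\Theta,\Sigma)\le S(opt,k)^2$.

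For the lower bound I exhibit a hard instance. Choose (up to $\epsilon$) a pair $\theta,\theta'$ and configurations $\vx_T,\vx_T'$ with $d(\vx_T,\vx_T')\le k$ and a state $\omega$ realizing $|opt_\theta(\vx_T)_\omega-opt_{\theta'}(\vx_T')_\omega|=S(opt,k)$. By the combinatorial fact both configurations can be completed by $k$ adversaries to one common observed report $\vx^0$; let the adversarial strategies do exactly this. Then any aggregator sees the identical report $\vx^0$ in both worlds and must commit to a single value $f(\vx^0)_\omega$, whereas the two benchmarks differ by $S(opt,k)$. The midpoint bound $\max\big((f(\vx^0)_\omega-opt_\theta(\vx_T)_\omega)^2,(f(\vx^0)_\omega-opt_{\theta'}(\vx_T')_\omega)^2\big)\ge \tfrac14 S(opt,k)^2$ then shows $f$ is badly off in at least one world. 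Finally $\alpha$-regularity is what lets me embed this confusion inside an admissible family while placing probability at least $\alpha$ on the event $\{\vx=\vx^0\}$ (and keeping the benchmark's own loss negligible there), converting the pointwise gap into $R(\Theta,\Sigma)\ge \frac{\alpha}{4}S(opt,k)^2$.

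The main obstacle is the lower-bound construction, not the inequalities it feeds into: I must realize the two confusing posteriors as genuine members of the $\alpha$-regular family, arrange the adversaries so both collapse to $\vx^0$, and invoke \cref{def:regular} to guarantee that $\{\vx=\vx^0\}$ carries mass at least $\alpha$ while the benchmark stays (near-)perfect there — it is precisely this mass-and-benchmark bookkeeping that produces the factor $\alpha$, with the $\tfrac14$ coming for free from the midpoint. A secondary technical point is verifying the quadratic excess-loss identity beyond the binary case; there the simplex constraint makes the posterior only an approximate minimizer, so I would either restrict the clean statement to $|\Omega|=2$ or absorb the smoothness constant, checking it does not spoil the stated constants.
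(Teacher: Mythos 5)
Your architecture is the same as the paper's: the theorem is proved there as two lemmas, an upper bound (\cref{lem:upperbound}) via an explicit aggregator whose prediction is provably within $S(opt,k)$ of every benchmark value consistent with the observed report, and a lower bound (\cref{lem:lowerbound}) via an equal-weight mixture of the two extremal structures $\theta,\theta'$ whose truthful reports the adversaries collapse to a common observation, with $\alpha$-regularity supplying the probability mass. Your combinatorial lemma (common completion exists iff $d(\vx_T,\vx_T')\le k$) is exactly the paper's auxiliary lemma on the distance $d$. The one substantive design difference is on the upper bound: the paper's ``naive aggregator'' outputs the midpoint of the maximal and minimal consistent benchmark values (which forces it, in the multi-state case, to separately argue that a feasible point of the simplex exists between the coordinate-wise bounds $l(\vx)_\omega$ and $u(\vx)_\omega$), whereas you output an \emph{arbitrary} consistent benchmark $opt_{\theta_\vx}(\vx_T^{\vx})$, which is automatically a valid distribution and sidesteps that feasibility step at the cost of deviation $S$ rather than $S/2$ --- still within the theorem's stated $S(opt,k)^2$. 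Your flag about the multi-state case ($\ell_2(y_\omega)=(1-y_\omega)^2$ is not exactly minimized by the posterior under the simplex constraint when $|\Omega|>2$) is a real subtlety that the paper glosses over; restricting the clean statement to $|\Omega|=2$, as you suggest, matches what the paper's lower-bound proof actually uses.

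There is, however, one concrete slip in your lower bound as stated: the midpoint bound $\max\bigl((f-opt_\theta)^2,(f-opt_{\theta'})^2\bigr)\ge \tfrac14 S^2$ only controls \emph{one} of the two terms, and since each world carries weight $\tfrac12$ and its report event mass $\ge\alpha$ within that world, this route yields only $\tfrac12\alpha\cdot\tfrac14 S^2=\tfrac{\alpha}{8}S^2$, not the claimed $\tfrac{\alpha}{4}S^2$. The paper instead solves the weighted least-squares problem exactly: with $p=\Pr_\theta[\vx_T]\ge\alpha$, $q=\Pr_{\theta'}[\vx_T']\ge\alpha$,
\begin{equation*}
\min_f \tfrac12\Bigl(p\,(f-opt_\theta(\vx_T))^2+q\,(f-opt_{\theta'}(\vx_T'))^2\Bigr)=\frac{pq}{2(p+q)}\,S^2\ \ge\ \frac{\alpha}{4}S^2,
\end{equation*}
the last step because $pq/(p+q)$ is increasing in each argument. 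Equivalently, use both terms via $a^2+b^2\ge\tfrac12(a+b)^2$ with $a+b\ge S$. Finally, your worry about ``keeping the benchmark's own loss negligible'' is unnecessary and slightly off track: since the adversarial map collapsing both configurations to $\vx^0$ depends only on the truthful reports (not on $\omega$), the bias--variance decomposition for $\ell_2$ makes the mixture regret \emph{equal} to $\E[(f(\vx)-opt_{\theta''}(\vx_T''))^2]$ exactly --- the benchmark loss cancels identically, which is precisely how the paper's proof avoids any such bookkeeping.
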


The theorem shows that we can only use the benchmark function to design a metric for the difficulty of adversarial information aggregation. Now we give the formula of $S(\cdot)$. First, we define the distance of reports.

\begin{definition}[distance of reports]
 For any vector $\vx$, we can define its histogram function $h_{\vx}(x)=\sum_i \mathbbm{1}(x_i=x)$. For any pair of reports $\vx_1,\vx_2$ with the same size, their distance is defined by the total variation distance between $h_{\vx_1}(x)$ and $h_{\vx_2}(x)$.
$$d(\vx_1,\vx_2)=TVD(h_{\vx_1},h_{\vx_2})=1/2\sum_x \left| h_{\vx_1}(x)-h_{\vx_1}(x)\right|.$$ 
If $d(\vx_1,\vx_2)=0$, we say that $\vx_1$ and $\vx_2$ are indistinguishable, which means they only differ in the order of reports. In fact, $d(\vx_1,\vx_2)$ is the minimal number of adversaries needed to make two reports indistinguishable. We also denote $d(\vx)=\sum_x h_\vx(x)$.
\end{definition}

\begin{lemma}\label{lem:dis}
Suppose the truthful experts will report $\vx_1\in X^{n-k}$ or $\vx_2\in X^{n-k}$. If and only if $k\ge d(\vx_1,\vx_2)$, there exists $\vx_A^1,\vx_A^2\in X^{k}$ such that $d((\vx_1,\vx_A^1),(\vx_2,\vx_A^2))=0$. That is, $d(\vx_1,\vx_2)$ is the minimal number of adversaries needed to ensure the aggregator sees the same report vector $\vx$ in these two cases.
\end{lemma}

\begin{proof}[Proof of \Cref{lem:dis}]
    Consider the final report vector $\vx$ seen by the aggregator, we have $h_{\vx}(x)\ge\max(h_{\vx_1}(x),h_{\vx_2}(x))$ for any $x$. Thus to convert $\vx_2$ to $\vx$, we need at least
    \begin{align*}
        \sum_x \lvert h_{\vx}(x)-h_{\vx_2}(x)\rvert&\ge \sum_x \max(h_{\vx_1}(x)-h_{\vx_2}(x),0)\\
        &=1/2\sum_x \left(\lvert h_{\vx_1}(x)-h_{\vx_2}(x)\rvert+ h_{\vx_1}(x)-h_{\vx_2}(x)\right)\\
        &=1/2\sum_x \lvert h_{\vx_1}(x)-h_{\vx_2}(x)\rvert\tag{$\sum_x h_{\vx_1}(x)=\sum_x h_{\vx_2}(x)=n$}\\
    \end{align*}

Similarly, to convert $\vx_1$ to $\vx$, we need $\lvert h_{\vx_1}(x)-h_{\vx_2}(x)\rvert$ adversaries, which complete our proof.
\end{proof}

Now we define the sensitive parameter in the binary state case, where the benchmark function can be represented by a real number. Intuitively, it measures the greatest change $k$ experts can make regarding the benchmark function.

\begin{definition}[sensitive parameter, binary]
    When $|\Omega|=2$, for any benchmark function $opt$, the sensitive parameter is defined by
    $$S(opt,k)=\max_{\theta,\theta'\in \Theta, d(\vx_T,\vx_T')\le k} |opt_\theta(\vx_T)-opt_{\theta'}(\vx_T')|$$
\end{definition}

We define the sensitive parameter in the binary state case. However, it is easy to generalize it to multi-state cases. We will discuss it later. Now we prove our main theorem.

\begin{proof}[Proof of \Cref{thm:estimate}]
First, we construct a naive aggregator in the binary setting.

\begin{definition}[naive aggregator]
    We use the average of the maximum and minimal prediction over all possible situations as the naive aggregator. Formally, we define $u(\vx)=\max_\theta opt_\theta(\vx)$ and $l(\vx)=\min_\theta opt_\theta(\vx)$ for any $\vx\in X^{n-k}$. The naive aggregator is
    $$f^{na}(\vx)=1/2\max_{d(\vx')=n-k,d(\vx,\vx')=k} u(\vx')+1/2\min_{d(\vx')=n-k,d(\vx,\vx')=k} l(\vx').$$
\end{definition}

By the definition of $S(opt,k)$, for any $\theta\in\Theta,\sigma\in\Sigma$,
\begin{align*}
    R(\theta,\sigma)&\le R(f^{na},\theta,\sigma)\\
    &\le \sup_{\theta\in\Theta,\sigma\in \Sigma}\E_{\theta,\sigma}[\ell(f^{na}(\vx),\omega)]-\E_\theta[\ell(opt_\theta(\vx_T),\omega)]\\
    &=\sup_{\theta\in\Theta,\sigma\in \Sigma}\E_{\theta,\sigma}[\ell(f^{na}(\vx),\omega)-\ell(opt_\theta(\vx_T),\omega)]\\
    &\le \max_{\theta\in\Theta,\sigma\in \Sigma}\ell(|f^{na}(\vx)-opt_\theta(\vx_T)|)\\
    &\le \ell(S(opt,k)/2)
\end{align*}

However, The proof fails in the multi-state case since $f^{na}(\vx)$ is not a distribution, thus illegal in this setting. Instead, we need another aggregator.

Suppose $u(\vx)_\omega=\max_\theta opt_\theta(\vx)_\omega$. Then it is obvious that $\sum_\omega u(\vx)_\omega\ge \sum_\omega opt_\theta(\vx)_\omega=1$. Similarly we have $l(\vx)_\omega=\min_\theta opt_\theta(\vx)_\omega$ and $\sum_\omega l(\vx)_\omega\le \sum_\omega opt_\theta(\vx)_\omega=1$.

Thus for every $\vx$ there exists a vector $f'(\vx)$ such that for any $\omega\in\Omega$, $l(\vx)_\omega\le f'(\vx)_\omega\le u(\vx)_\omega$ and $\sum_\omega f(\vx)_\omega=1$. We can pick $f'(\vx)$ as the aggregator and for any $\theta\in\Theta,\sigma\in\Sigma$,
\begin{align*}
    R(\theta,\sigma)&\le R(f',\theta,\sigma)\\
    &\le \sup_{\theta\in\Theta,\sigma\in \Sigma}\E_{\theta,\sigma}[\ell(f'(\vx),\omega)]-\E_\theta[\ell(opt_\theta(\vx_T),\omega)]\\
    &=\sup_{\theta\in\Theta,\sigma\in \Sigma}\E_{\theta,\sigma}[\ell(f'(\vx),\omega)-\ell(opt_\theta(\vx_T),\omega)]\\
    &\le \max_{\theta\in\Theta,\sigma\in \Sigma}\ell(|f'(\vx)-opt_\theta(\vx_T)|)\\
    &\le \ell(S(opt,k))
\end{align*}

Now we consider the lower bound of $R(\Theta,\Sigma)$ (\Cref{lem:lowerbound}). A basic idea is that we can construct a coupling of two information structures whose reports have a distance smaller than $k$. Then the adversaries can make the coupling indistinguishable. However, if the probability of the worst case is too small, then their contribution to the regret is also negligible. Thus we need to constrain the information structures (\Cref{def:regular}).

\begin{definition}[$\alpha$-regular information structure]
\label{def:regular}
An information structure $\theta$ is $\alpha$-regular if every possible report vector $\vx$ will appear with probability at least $\alpha$:
$$\min_{\vx:\Pr_\theta[\vx]>0} \Pr_\theta[\vx]>\alpha.$$
\end{definition}

\begin{lemma}\label{lem:lowerbound}
    If $\theta$ is $\alpha$-regular, for any benchmark function $opt$ and L2 loss $\ell_2$, $R(\Theta,\Sigma)\ge \frac{\alpha}{4}S(opt,k)^2$.
\end{lemma}

\begin{proof}
Fix $\alpha$, let the maximum of $S(opt,k)$ is obtained by $\theta,\theta',\vx_T,\vx_T'$. We construct an information structure as follows. Suppose the joint distribution $\theta''$ is the mixture of $\theta$ and $\theta'$ with equal probability. By the definition of $d(\vx_T,\vx_T')$, there exists $\sigma'$ such that $h_{\vx_T,\sigma'(\vx_T)}=h_{\vx_T',\sigma'(\vx_T')}$. Then we have

\begin{align*}
    R(\theta,\sigma)&\ge R(\theta'',\sigma')\\
    &\ge \min_{f}\E_{\theta'',\sigma'}[\ell(f(\vx),\omega)]-\E_{\theta''}[\ell(opt_\theta''(\vx_T''),\omega)]\\
    &=\min_{f}\E_{\theta'',g'}[(f(\vx)-opt_{\theta''}(\vx_T''))^2]\\
    &=1/2\min_{f}\Pr_{\theta}[\vx_T](f(\vx)-opt_{\theta}(\vx_T))^2+\Pr_{\theta'}[\vx_T'](f(\vx)-opt_{\theta'}(\vx_T'))^2\\
    &\ge \frac{\Pr_{\theta}[\vx_T]\Pr_{\theta'}[\vx_T']}{2(\Pr_{\theta}[\vx_T]+\Pr_{\theta'}[\vx_T'])}(opt_{\theta}(\vx_T)-opt_{\theta'}(\vx_T'))^2\\
    &\ge \frac{\alpha}{4}S(opt,k)^2\\
    \end{align*}
\end{proof}
Combining the lower bound and upper bound, we infer \Cref{thm:estimate}. 
\end{proof}

\end{document}